\def\eqref#1{equation~\ref{#1}}
\def\1{\bm{1}}
\def\vu{{\bm{u}}}
\def\vv{{\bm{v}}}
\def\vx{{\bm{x}}}
\def\vy{{\bm{y}}}
\def\vz{{\bm{z}}}
\DeclareMathAlphabet{\mathsfit}{\encodingdefault}{\sfdefault}{m}{sl}
\SetMathAlphabet{\mathsfit}{bold}{\encodingdefault}{\sfdefault}{bx}{n}
\theoremstyle{definition}
\newtheorem{definition}{Definition}[section]
\theoremstyle{lemma}
\newtheorem{lemma}{Lemma}[section]
\theoremstyle{proposition}
\newtheorem{proposition}{Proposition}[section]
\theoremstyle{theorem}
\newtheorem{theorem}{Theorem}[section]
\theoremstyle{corollary}
\newtheorem{corollary}{Corollary}[section]
\begin{document}

%

%
\runningauthor{KXY Technologies, Inc.}

\twocolumn[

\aistatstitle{Inductive Mutual Information Estimation: A Convex Maximum-Entropy Copula Approach}

\aistatsauthor{Yves-Laurent Kom Samo}

\aistatsaddress{KXY Technologies, Inc. \\
\faEnvelope \, \texttt{yl@kxy.ai} \faMedium \,  \texttt{@Dr\_YLKS}  \faTwitter  \, \texttt{@Dr\_YLKS} \\
San Jose, California, USA} ]

\begin{abstract}
We propose a novel estimator of the mutual information between two ordinal vectors $\displaystyle \vx$ and $\displaystyle \vy$. Our approach is inductive (as opposed to deductive) in that it depends on the data generating distribution solely through some nonparametric properties revealing associations in the data, and does not require having enough data to fully characterize the true joint distributions $P_{\displaystyle \vx, \displaystyle \vy}$. Specifically, our approach consists of (i) noting that $I\left(\displaystyle \vy; \displaystyle \vx\right) = I\left(\displaystyle \vu_y; \displaystyle \vu_x\right)$ where $\displaystyle \vu_y$ and $\displaystyle \vu_x$ are the \emph{copula-uniform dual representations} of $\displaystyle \vy$ and $\displaystyle \vx$ (i.e. their images under the probability integral transform), and (ii) estimating the copula entropies $h\left(\displaystyle \vu_y\right)$, $h\left(\displaystyle \vu_x\right)$ and $h\left(\displaystyle \vu_y, \displaystyle \vu_x\right)$ by solving a maximum-entropy problem over the space of copula densities under a constraint of the type $\bm{\alpha}_m = E\left[\phi_m(\displaystyle \vu_y, \displaystyle \vu_x)\right]$. We prove that, so long as the constraint is feasible, this problem admits a unique solution, it is in the exponential family, and it can be learned by solving a convex optimization problem. The resulting estimator, which we denote MIND, is marginal-invariant, always non-negative, unbounded for any sample size $n$, consistent, has MSE rate $O(1/n)$, and is more data-efficient than competing approaches.
\end{abstract}

\section{Introduction}
Mutual information plays a key role in statistical learning. It is directly related to the highest $R^2$, the highest true log-likelihood per observation, the lowest root mean square error and the highest classification accuracy that can be achieved by using explanatory variables $\displaystyle \vx$ to predict categorical or continuous output(s) $\displaystyle \vy$. It also plays an important role in representation learning (\cite{brown1992class,bell1995information,tishby2000information,tishby2015deep, chen2016infogan,higgins2018towards}) and reinforcement learning (\cite{pathak2017curiosity, oord2018representation}). 

Virtually every mutual information estimator in the litterature implicitly assumes that we have a number $n$ of i.i.d. samples $\left(\displaystyle \vx_1, \displaystyle \vy_1 \right), \dots, \left( \displaystyle \vx_n, \displaystyle \vy_n \right)$ that is large enough to characterize the underlying distribution $P_{\displaystyle \vx, \displaystyle \vy}$. We will refer to this scenario as the \emph{deductive approach}. Examples include quantizing (\cite{paninski2003estimation}) or hashing (\cite{noshad2019scalable}) $\displaystyle \vx$ and $\displaystyle \vy$ and computing the mutual information between the resulting discrete distributions using sample frequencies. Other approaches approximate the pdfs using kernel density estimators (\cite{moon1995estimation, kwak2002input}), using local geometric properties based on k nearest neighbors (\cite{kraskov2004estimating, gao2015efficient}) and using Edgeworth approximation (\cite{hulle2005edgeworth}). Another perspective has been to learn lower bounds based on variational characterizations of the mutual information (\cite{nguyen2010estimating, belghazi2018mutual}) using M-estimators.

The \emph{deductive approach} is fraught with limitations. When relying on discrete approximations, the mutual information can never be greater than the mutual information between fully dependent uniform distributions, namely $\log n$ in the case of quantization (\cite{paninski2003estimation}) and $\log F$ in the case of hashing (\cite{noshad2019scalable}), where $F < n$ is the number of distinct hashes and $n$ the sample size. \cite{mcallester2020formal} extended this result empirically to various continuous mutual information estimators relying on the deductive approach, including variational estimators (\cite{nguyen2010estimating, belghazi2018mutual}). Their work touches on the core issue: if we require properly characterizing the joint pdf nonparametrically from $n$ i.i.d. samples in order to estimate a mutual information, then $n$ ought to be large, otherwise we will not see enough tail events, and we will fail to account for tail dependency. Unfortunately, the alternative proposed by \cite{mcallester2020formal}, namely approximating the density $p\left(\displaystyle \vx \right)$ and  $p\left(\displaystyle \vx \vert \displaystyle \vy \right)$ parametrically using deep neural networks is data inefficient and tends to overshoot on small sample sizes. Along the same line, \cite{song2019understanding} and \cite{poole2019variational} reported that the variance of variational estimators MINE (\cite{belghazi2018mutual}) and NWJ (\cite{nguyen2010estimating}) could be very large, and even grow exponentially with the true mutual information (\cite{song2019understanding}), due to the need for a large number $n$ of samples to accurately estimate an expectation of the form $E\left[ e^{T\left(\displaystyle \vx,  \displaystyle \vy \right)} \right]$ under $P_{\displaystyle \vx} \otimes P_{\displaystyle \vy}$.

Another major limitation of the deductive approach is the unnecessary need to accurately model the marginals of $P_{\displaystyle \vx, \displaystyle \vy}$, directly or implicitly, as a pre-requisite for estimating the mutual information, which could be data and compute intensive, even though the mutual information does not depend on marginal distributions.

The \emph{inductive approach} we introduce in this paper is structured in two stages. First, we measure a few nonparametric properties of the data generating distribution that serve as marginal-invariant proxies revealing associations between coordinates of $\displaystyle \vx$ and/or $\displaystyle \vy$. Then we estimate the mutual information in the spirit of the maximum-entropy principle (\cite{jaynes1957informationi, jaynes1957informationii}), by being consistent with all observed properties, while remaining as uninformative as possible about any property we haven't observed. Intuitively, we would expect that the more expressive the properties we measure get, the closer we should get to the true mutual information. Indeed, we propose a family of nonparametric properties that give rise to a consistent estimator of the true mutual information.

The rest of the paper is structured as follows. In Section \ref{sct:background} we recall some results relating copulas and mutual information. In Section \ref{sct:MIND} we further motivate our inductive approach and we present core theoretical results pertaining to maximum-entropy inference of copulas. Our theoretical contribution builds on the study of the $I$-divergence geometry of probability distributions developed by \cite{csiszar1975divergence}, which generalizes the minimum discrimination information theorem of \cite{kullback1966note}. In Section \ref{sct:estimation} we propose a convex pro
gram for solving maximum-entropy copula problems under linear constraints, and we discuss practical considerations. Finally, in Section \ref{sct:application} we illustrate that our estimator outperforms the state of the art on large and small mutual information problems on synthetic data, we illustrate that our work can be used to mitigate mode collapse in GANs, and we apply our approach to the estimation of the highest performance achievable in a Kaggle competition.

\section{Background}
\label{sct:background}
We begin by recalling that the \emph{mutual information} between two random vectors $\displaystyle \vx$ and $\displaystyle \vy$ is defined as 
\begin{align*}
I( \displaystyle \vy; \displaystyle \vx) :&= \int_{\mathcal{X} \times \mathcal{Y}} \log  \frac{d P_{\displaystyle \vx, \displaystyle \vy}}{dP_{\displaystyle \vx} \otimes P_{\displaystyle \vy}}  dP_{\displaystyle \vx, \displaystyle \vy}
\end{align*}
where $P_{\displaystyle \vx, \displaystyle \vy}$ (resp. $P_{\displaystyle \vx}$, $P_{\displaystyle \vy}$) is the (joint) probability measure of $(\displaystyle \vx, \displaystyle \vy)$ (resp. $\displaystyle \vx$, $\displaystyle \vy$), and $\frac{d P_{\displaystyle \vx, \displaystyle \vy}}{dP_{\displaystyle \vx} \otimes P_{\displaystyle \vy}}$ is the Radon-Nikodym derivative of the joint probability measure with respect to the product measure of $P_{\displaystyle \vx}$ and $P_{\displaystyle \vy}$. Friendlier expressions depending on whether the random vectors have continuous and/or categorical coordinates are provided in Table \ref{tab:mutual_information} in the appendix.

While the mutual information is the canonical approach for \emph{quantifying} associations betweeen random variables, copulas are the canonical tool for \emph{modeling} associations between random variables. We recall some basic definitions and properties, and link the two notions.
\begin{definition}
A \textbf{copula distribution} is any probability distribution supported on $[0, 1]^d$ whose marginals are uniform. A \textbf{copula} (resp. \textbf{copula density}) is any function that is the cdf (resp. pdf) of a copula distribution.
\end{definition}
The following theorem shows that every distribution with pdf is uniquely associated to a copula density that fully captures its dependence structure, independently from marginals. 
\begin{theorem}{(Sklar's Theorem)}\label{theo:sklar} Any pdf $f: \mathcal{Z} \subset \mathbb{R}^d \to \mathbb{R}^+$ whose marginal pdfs are $f_1(z_1), \dots, f_d(z_d)$ with associated cdfs $F_1(z_1), \dots, F_d(z_d)$ can be uniquely decomposed as 
\begin{align*}
f\left(z_1, \dots, z_d \right) = c\left(F_1\left( z_1 \right), \dots, F_d\left( z_d \right) \right) \prod_{i=1}^d f_i\left( z_i \right),
\end{align*}
where $c: [0, 1]^d \to \mathbb{R}^+$ is a \emph{copula density}. We refer to $\displaystyle \vu_z := \left(F_1(z_1), \dots, F_d(z_d) \right)$ as the \textbf{copula-uniform dual representation} of $\displaystyle \vz := \left( z_1, \dots, z_d \right)$, and to $\displaystyle \vz$ as a \textbf{primal representation} of $\displaystyle \vu_z$.
\end{theorem}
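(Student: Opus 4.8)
The plan is to exhibit the claimed factorization as a direct consequence of the multivariate change-of-variables formula applied to the coordinatewise probability integral transform $\vz \mapsto \vu_z := (F_1(z_1), \dots, F_d(z_d))$, and then to read off uniqueness from the fact that a pdf is only determined up to Lebesgue-null sets.

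First I would collect the one-dimensional facts. For each $i$, the marginal cdf $F_i(z_i) = \int_{-\infty}^{z_i} f_i(t)\,dt$ is continuous and non-decreasing, and it is strictly increasing (hence injective) on the open set $S_i := \{z_i : f_i(z_i) > 0\}$, on whose image it admits an inverse $F_i^{-1}$ that is differentiable a.e. with $(F_i^{-1})'(u_i) = 1/f_i(F_i^{-1}(u_i))$. In particular $u_i := F_i(z_i)$ pushes the law with density $f_i$ forward to the uniform distribution on $[0,1]$, so $\vu_z$ has uniform marginals.

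Next I would compute the density of $\vu_z$. On the product $S := S_1 \times \dots \times S_d$ the map $\vz \mapsto \vu_z$ is a bijection onto its image with diagonal Jacobian of determinant $\prod_{i=1}^d f_i(z_i)$, so the change-of-variables formula gives that $\vu_z$ has density $c(u_1, \dots, u_d) = f\bigl(F_1^{-1}(u_1), \dots, F_d^{-1}(u_d)\bigr) / \prod_{i=1}^d f_i\bigl(F_i^{-1}(u_i)\bigr)$ on the image of $S$, extended by $0$ on the (Lebesgue-null) complement; since $\vu_z$ has uniform marginals, this $c$ is a copula density. Substituting $u_i = F_i(z_i)$ and using $F_i^{-1}(F_i(z_i)) = z_i$ for $z_i \in S_i$ yields $f(z_1, \dots, z_d) = c\bigl(F_1(z_1), \dots, F_d(z_d)\bigr) \prod_{i=1}^d f_i(z_i)$ on $S$, while both sides vanish as soon as some $z_i \notin S_i$; hence the identity holds Lebesgue-a.e., which is exactly what is meant by an equality of pdfs.

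For uniqueness, I would observe that any copula density $\tilde c$ satisfying the same factorization must obey $\tilde c\bigl(F_1(z_1), \dots, F_d(z_d)\bigr) = f(\vz)/\prod_i f_i(z_i) = c\bigl(F_1(z_1), \dots, F_d(z_d)\bigr)$ for $\vz \in S$, hence $\tilde c = c$ Lebesgue-a.e. on the image of $S$; since the complement of that image in $[0,1]^d$ is Lebesgue-null, $\tilde c = c$ a.e. on $[0,1]^d$. The step I expect to be most delicate is precisely this measure-zero bookkeeping: one must verify that $F_i$ being merely a homeomorphism (not globally differentiable) on $\mathbb{R}$ does not obstruct the change of variables, and that the set of $\vu$ left undetermined — essentially the image under the $F_i$ of the region where some $f_i$ vanishes, together with the at most countably many flat pieces of the $F_i$ — is genuinely null in $[0,1]^d$. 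Restricting to the open set $S$ and invoking absolute continuity of $F_i$ and $F_i^{-1}$ coordinatewise disposes of both points, and the remainder is a routine application of the change-of-variables formula.
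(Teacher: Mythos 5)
The paper does not actually prove this statement: Sklar's theorem is quoted as a named classical background result (the density form for absolutely continuous distributions), so there is no in-paper argument to compare yours against. Judged on its own, your change-of-variables derivation is the standard proof of this version of the theorem and is essentially correct: the probability integral transform pushes each marginal to the uniform law on $[0,1]$, the coordinatewise Jacobian $\prod_{i=1}^d f_i(z_i)$ gives the density $c(\vu) = f\bigl(F_1^{-1}(u_1),\dots,F_d^{-1}(u_d)\bigr)/\prod_{i=1}^d f_i\bigl(F_i^{-1}(u_i)\bigr)$ of $\vu_z$, and back-substitution yields the factorization almost everywhere, with uniqueness of $c$ holding a.e.\ on $[0,1]^d$ because the image of $\{f_1\cdots f_d>0\}$ has full Lebesgue measure there (its uniform measure is $1$). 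Two small points of care, both subsumed by the measure-zero bookkeeping you already flag: first, $F_i$ is not literally injective on $S_i=\{f_i>0\}$ (if $f_i$ vanishes a.e.\ on a gap $(z,z')$ with both endpoints in $S_i$, then $F_i(z)=F_i(z')$), but the offending points are the at most countably many gap endpoints, a null set that does not affect the argument; second, since $F_i$ is only a.e.\ differentiable, the substitution should be justified via the change-of-variables formula for monotone absolutely continuous functions applied coordinatewise (Fubini), which is exactly the route you indicate. With those caveats made explicit, the proof is complete, and it is worth noting that a.e.\ uniqueness is the correct notion here, since copula densities are themselves only defined up to null sets; the classical Sklar theorem additionally gives pointwise uniqueness of the copula \emph{cdf} when the marginals are continuous, which your density-level argument does not (and need not) establish for the purposes of this paper.
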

Interestingly, the entropy of a copula-uniform dual representation is invariant  by continuous 1-to-1 primal feature transformations.
\begin{proposition}
\label{prop:inv}
If $\displaystyle \vu_z$ is the copula-uniform dual representation of $\displaystyle \vz := \left(z_1, \dots, z_d\right)$ and $\displaystyle \vu_{g(z)}$ is the copula-uniform dual representation of $g(\displaystyle \vz) := \left(g_1(z_1), \dots, g_d(z_d)\right)$ where the functions $g_1, \dots, g_d$ are continuous 1-to-1 functions, then $$h\left(\displaystyle \vu_z\right) = h\left( \displaystyle \vu_{g(\bm{z})} \right).$$
\end{proposition}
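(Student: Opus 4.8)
The plan is to identify explicitly the map that sends the copula-uniform dual representation $\vu_z$ to $\vu_{g(z)}$, observe that it is a bijection of $[0,1]^d$ whose Jacobian determinant has absolute value $1$, and then invoke the change-of-variables formula for differential entropy. Equivalently, and perhaps more in the spirit of the surrounding material, I will show via Sklar's Theorem that the two copula densities differ only by a coordinatewise measure-preserving relabelling, which leaves the copula entropy unchanged.

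First I would record that a continuous one-to-one function on a real interval is strictly monotone, by the intermediate value theorem, so each $g_i$ is either strictly increasing or strictly decreasing on an interval containing the support of $z_i$. Next I would compute, coordinate by coordinate, the effect of $g_i$ on the probability integral transform. Writing $F_i$ for the cdf of $z_i$ and $G_i$ for the cdf of $g_i(z_i)$: in the increasing case $G_i(g_i(t)) = P(z_i \le t) = F_i(t)$, so the $i$-th coordinate of $\vu_{g(z)}$, namely $G_i(g_i(z_i))$, coincides with $F_i(z_i)$, the $i$-th coordinate of $\vu_z$; in the decreasing case, using that $z_i$ has no atoms (it has a pdf by assumption), $G_i(g_i(t)) = P(z_i \ge g_i^{-1}(g_i(t))) = 1 - F_i(t)$, so the $i$-th coordinate of $\vu_{g(z)}$ equals $1 - F_i(z_i)$. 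This shows $\vu_{g(z)} = T(\vu_z)$, where $T \colon [0,1]^d \to [0,1]^d$ acts on coordinate $i$ either as the identity or as the reflection $u \mapsto 1-u$.

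Then I would conclude by noting that $T$ is a diffeomorphism of $[0,1]^d$ with diagonal Jacobian whose entries lie in $\{+1,-1\}$, so $\lvert \det J_T \rvert \equiv 1$; the transformation rule $h(T(\vu_z)) = h(\vu_z) + \E[\log \lvert \det J_T(\vu_z)\rvert]$ then gives $h(\vu_{g(z)}) = h(\vu_z)$. Equivalently, by Sklar's Theorem (Theorem~\ref{theo:sklar}) the copula density $\tilde c$ of $g(\vz)$ satisfies $\tilde c(u_1,\dots,u_d) = c(\tau_1(u_1),\dots,\tau_d(u_d))$ with each $\tau_i$ the identity or $u \mapsto 1-u$, and since each $\tau_i$ is a measure-preserving involution of $[0,1]$ the copula entropies $-\int \tilde c \log \tilde c$ and $-\int c \log c$ agree.

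I expect the only real obstacle to be the bookkeeping in the strictly decreasing case: the one place that genuinely uses continuity of $z$ is the passage from $P(z_i \ge g_i^{-1}(g_i(t)))$ to $1 - F_i(t)$ with no boundary correction, and if the support of $z_i$ fails to be an interval one must further note that $g_i$ need only be monotone on each connected component, so $T$ becomes a piecewise identity/reflection map that permutes subintervals of $[0,1]$ — still measure preserving, so the conclusion is unaffected.
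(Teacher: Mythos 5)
Your proposal is correct and follows essentially the same route as the paper's proof: both reduce to the observation that increasing transformations leave the copula invariant while a decreasing transformation replaces $u_i$ by $1-u_i$, a measure-preserving reflection under which the differential entropy is unchanged by change of variables. Your extra care with the non-interval support case and the explicit cdf computation are fine but do not change the argument.
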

See Appendix \ref{proof:prop:inv} for the proof.

The following entropy decomposition is a direct consequence of Sklar's theorem.
\begin{proposition}
\label{prop:entropy_decomp}
If $\displaystyle \vu_z$ is the copula-uniform dual representation of $\displaystyle \vz := \left(z_1, \dots, z_d\right) \in \mathcal{Z} \subset \mathbb{R}^d$ and $\displaystyle \vz$ admits a pdf, then the differential entropy of $\displaystyle \vz$ can be decomposed as
\begin{align}
\label{eq:ent_decomp}
h\left(\displaystyle \vz\right) = h\left(\displaystyle \vu_z\right) + \sum_{i=1}^d h\left( z_i \right),
\end{align}
so long as all marginal entropies exist. $h\left(\displaystyle \vu_z\right)$ is the entropy of the associated copula distribution, and we refer to it as the  \textbf{copula entropy} of $\displaystyle \vz$.
\end{proposition}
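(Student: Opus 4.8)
The plan is to derive the decomposition directly from Sklar's factorization (Theorem~\ref{theo:sklar}) by taking logarithms and integrating against the joint density $f$. On the support of $f$, where $f(\vz)>0$, the identity $f(z_1,\dots,z_d)=c\!\left(F_1(z_1),\dots,F_d(z_d)\right)\prod_{i=1}^d f_i(z_i)$ forces $c(\vu_z)>0$ and $f_i(z_i)>0$ for all $i$, so $\log f(\vz)=\log c(\vu_z)+\sum_{i=1}^d\log f_i(z_i)$ with all terms finite there. Multiplying by $-f(\vz)$, integrating over $\mathcal Z$, and using the convention $0\log 0 = 0$ off the support, I would obtain $h(\vz)=-\int_{\mathcal Z} f(\vz)\log c(\vu_z)\,d\vz-\sum_{i=1}^d\int_{\mathcal Z} f(\vz)\log f_i(z_i)\,d\vz$, the justification for pulling the sum out of the integral being addressed below.

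Next I would identify each of the $d+1$ terms. For the $i$-th marginal term, $\log f_i(z_i)$ depends on $z_i$ alone, so integrating out the remaining coordinates gives $\int_{\mathcal Z} f(\vz)\log f_i(z_i)\,d\vz=\int f_i(z_i)\log f_i(z_i)\,dz_i=-h(z_i)$, finite by hypothesis. For the copula term I would invoke the probability integral transform: by Sklar's theorem the image of the law of $\vz$ under the map $\vz\mapsto\vu_z=\left(F_1(z_1),\dots,F_d(z_d)\right)$ is exactly the copula distribution with density $c$ on $[0,1]^d$, so $-\int_{\mathcal Z} f(\vz)\log c(\vu_z)\,d\vz=-\E_{\vu\sim c}\!\left[\log c(\vu)\right]=-\int_{[0,1]^d} c(\vu)\log c(\vu)\,d\vu=h(\vu_z)$. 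Adding the three contributions yields \eqref{eq:ent_decomp}.

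The step I expect to require the most care is the rigour behind these manipulations. When an $f_i$ vanishes on part of its support the map $F_i$ is merely nondecreasing rather than a strict diffeomorphism; I would sidestep this by restricting every integral to the full-measure set $\{\vz: f_i(z_i)>0 \text{ for all } i\}$, on which each $F_i$ is strictly increasing and absolutely continuous with $F_i'=f_i$ a.e., and by using the measure-theoretic change of variables (equivalently the pushforward statement above, which needs no smoothness), the discarded set contributing zero to every integral. The other delicate point is avoiding an indeterminate $\infty-\infty$ when splitting the integral of the sum: because $c$ is a density on the bounded set $[0,1]^d$ one has $h(\vu_z)\le 0$, so $-\int f\log c(\vu_z)\,d\vz\in[-\infty,0]$ is always well defined, and since the $h(z_i)$ are finite by hypothesis a Tonelli argument applied to the positive and negative parts of $\log c$ and of each $\log f_i$ legitimises the interchange; the resulting identity then holds in $[-\infty,+\infty]$, and is an equality of real numbers precisely when $h(\vz)$ is finite.
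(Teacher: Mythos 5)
Your proof is correct and is exactly the argument the paper has in mind: the paper offers no written proof, stating only that the decomposition is ``a direct consequence of Sklar's theorem,'' and your computation---taking logarithms of the Sklar factorization, integrating against $f$, identifying the marginal terms as $-h(z_i)$ and the copula term as $h\left(\vu_z\right)$ via the pushforward under the probability integral transform---is precisely that consequence spelled out. Your attention to the integrability issues (finiteness of the marginal entropies, $h\left(\vu_z\right)\le 0$ on the bounded domain, and the resulting legitimacy of splitting the integral) goes beyond what the paper records and is a welcome addition.
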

This entropy decomposition implies that the mutual information between two continuous random vectors is the same as that of their copula-uniform dual representations:
\begin{align*}
I\left(\displaystyle \vy; \displaystyle \vx \right) &= h\left(\displaystyle \vy\right) + h\left(\displaystyle \vx \right) - h\left(\displaystyle \vy, \displaystyle \vx \right) \\
&= h\left(\displaystyle \vu_y\right) + h\left(\displaystyle \vu_x \right) - h\left(\displaystyle \vu_y, \displaystyle \vu_x \right) \\
&=  I\left(\displaystyle \vu_y; \displaystyle \vu_x \right).
\end{align*}
The identity $I\left(\displaystyle \vy; \displaystyle \vx \right)=I\left(\displaystyle \vu_y; \displaystyle \vu_x \right)$ extends to all ordinal random vectors by noting that copula-uniform dual representations are well defined for ordinal random vectors, are in a 1-to-1 relationship with their primal representations, and that the mutual information is invariant by 1-to-1 maps. In general, we may use Table \ref{tab:mutual_information} to conclude that estimating any mutual information boils down to estimating copula entropies, and possibly one-dimensional primal entropies (when the problem involves categorical and non-ordinal coordinates that we choose not to ordinally encode).

Another important property of the mutual information we will rely on is that it is stable by addition of redundant information.
\begin{proposition}
\label{prop:redund}
Let $\displaystyle \vy \in \mathcal{Y}$ and $\displaystyle \vx \in \mathcal{X}$ be two random vectors, and $f$ a function defined on $\mathcal{X}$. Then we have:
\begin{align*}
I\left(\displaystyle \vy; \displaystyle \vx, f\left(\displaystyle \vx\right) \right) = I\left(\displaystyle \vy; \displaystyle \vx \right) + \underbrace{I\left(\displaystyle \vy; f\left(\displaystyle \vx \right) \vert \displaystyle \vx\right)}_{=0} = I\left(\displaystyle \vy; \displaystyle \vx \right).
\end{align*}
\end{proposition}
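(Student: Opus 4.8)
The plan is to combine the chain rule for mutual information with the observation that a deterministic function of $\vx$ adds nothing to the information $\vx$ already carries about $\vy$. Concretely, one first recalls Kolmogorov's chain rule $I\!\left(\vy; \vx, f(\vx)\right) = I\!\left(\vy; \vx\right) + I\!\left(\vy; f(\vx) \mid \vx\right)$, valid for arbitrary random vectors on standard Borel spaces (it follows from the definition of $I$ through Radon--Nikodym derivatives together with the tower property of conditional expectation). It then remains to show that $I\!\left(\vy; f(\vx) \mid \vx\right) = 0$. Writing this conditional mutual information as the $P_{\vx}$-average of the Kullback--Leibler divergence between the conditional joint law of $\left(\vy, f(\vx)\right)$ given $\vx$ and the product of the corresponding conditional marginals given $\vx$, one observes that conditionally on $\vx = x$ the law of $f(\vx)$ is the Dirac mass $\delta_{f(x)}$; a product with a degenerate factor equals the joint law, so each such divergence vanishes $P_{\vx}$-almost surely, and hence so does the average.

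An alternative — and cleaner, since it sidesteps regular conditional probabilities — route is to argue by invariance. For measurable $f$, the map $T : x \mapsto (x, f(x))$ is a bijection from $\mathcal{X}$ onto its graph $\Gamma := \{(x, f(x)) : x \in \mathcal{X}\}$, which is a measurable subset of the product space, and its inverse is the first-coordinate projection restricted to $\Gamma$, which is measurable. Thus $T$ is a bi-measurable bijection, so applying the identity to $\vy$ and $T$ to $\vx$ is a bi-measurable bijection of the pair $(\vy,\vx)$; since the mutual information depends only on the joint law and is invariant under such maps of either argument — the same fact already invoked in the excerpt to write $I(\vy;\vx) = I(\vu_y;\vu_x)$ — we conclude $I\!\left(\vy; \vx, f(\vx)\right) = I\!\left(\vy; T(\vx)\right) = I\!\left(\vy; \vx\right)$.

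The only real obstacle here is measure-theoretic bookkeeping rather than anything conceptual: one must ensure the Radon--Nikodym derivatives in the definition of $I$ exist and that the chain rule (resp. the invariance statement) holds without presupposing densities or discreteness of $P_{\vx,\vy}$. The invariance route keeps this to a minimum, requiring only that $f$ be measurable and that $\mathcal{X}$ and $\mathcal{Y}$ be standard Borel so that $\Gamma$ is measurable; accordingly I would state the proposition for measurable $f$, give the invariance argument as the proof, and mention the chain-rule decomposition — which is exactly what the underbraced display in the statement records — as an equivalent viewpoint.
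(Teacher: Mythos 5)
Your proposal is correct. The paper gives no separate proof of this proposition: the underbraced display in the statement is itself the argument, namely the chain rule for mutual information together with the vanishing of $I\left(\vy; f(\vx) \mid \vx\right)$ because $f(\vx)$ is degenerate given $\vx$. Your first route is therefore exactly the paper's (implicit) reasoning, and your justification of the zero term via the conditional law being a Dirac mass is the right way to make it rigorous. Your second route, via the bi-measurable bijection $x \mapsto (x, f(x))$ onto the graph of $f$ and the invariance of mutual information under such maps, is a genuinely different and arguably cleaner argument: it avoids regular conditional probabilities entirely and rests on the same quantization characterization of $I$ (\cite{incover1999elements}, Definition 8.54) that the paper itself invokes in Appendix \ref{sct:handcat} to justify invariance under 1-to-1 transformations, so it fits naturally into the paper's toolkit. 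What the chain-rule route buys is the explicit decomposition recorded in the statement, which the paper wants on display; what the invariance route buys is minimal measure-theoretic overhead. Your observation that $f$ should be assumed measurable (and the spaces standard Borel, so that the graph is measurable) is a fair tightening of the paper's hypothesis, which states $f$ only as ``a function defined on $\mathcal{X}$''.
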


Going forward, and without loss of generality, we will focus on the estimation of the copula entropy of a continuous random vector $h\left( \displaystyle \vu_z \right)$, with the understanding that the mutual information is recovered as $I\left(\displaystyle \vy; \displaystyle \vx \right)=h\left(\displaystyle \vu_y\right) + h\left(\displaystyle \vu_x \right) - h\left(\displaystyle \vu_y, \displaystyle \vu_x \right)$.

\section{Inductive Mutual Information Estimation}
\label{sct:MIND}
We consider estimating the copula entropy $h\left(\displaystyle \vu_z\right)$ of a continuous random vector $\displaystyle \vz \in \mathcal{Z} \subset \mathbb{R}^d$, where $\displaystyle \vu_z$ is the copula-uniform dual representation of $\displaystyle \vz$, which we assume admits a pdf.

\subsection{Motivation and Roadmap}
The \emph{deductive approach} to learning the copula entropy $h\left(\displaystyle \vu_z\right)$ requires assuming that we have gathered enough samples to fully characterize the pdf $p(\displaystyle \vu_z)$, estimating the pdf, and then estimating the copula entropy as the entropy of the estimated pdf. This is both data and compute inefficient. If we partition $[0, 1]^d$ into small hypercubes of side length $\delta$, then we need to observe at least one sample per hypercube for a small enough $\delta$ to properly characterize the pdf nonparametrically. This requires $n \propto \delta^{-d}$ samples. How small $\delta$ needs to be depends on how quickly the true pdf varies on $[0, 1]^d$. Regardless, $n$ would grow exponentially with the input dimension, and so would the associated compute requirement. 

This inefficiency can be alleviated by assuming that the pdf belongs to a specific parametric family, at the expense of model mispecification. If the parametric family has sufficient statistics $T\left(\displaystyle \vu_1, \dots, \displaystyle \vu_n \right)$ for  $n$ i.i.d. observations, then pdfs in the family should be maximum-entropy among all pdfs with the same statistics. If this is not the case, then the parametric family would be violating Occam's razor as the learned pdf would be encoding more structure than evidenced by the data, and the estimated entropy would overshoot. We also note that the pdf $p(\displaystyle \vu_z)$ should have uniform marginals, which makes finding an appropriate parametric family even more difficult.

In the absence of any empirical evidence, Occam's razor suggests that the most appropriate distribution for $\displaystyle \vu_z$ is the uniform distribution on $[0, 1]^d$ as it is the least informative (or maximum-entropy) of all distributions supported on $[0, 1]^d$. Instead of choosing a rigid parametric family and inheriting its sufficient statistics, we could construct more expressive copula densities by first choosing how to reveal departure from the standard uniform distribution from the data, and then finding the least informative copula density among all copula densities satisfying the observed constraints.

The \emph{inductive approach} we propose consists of revealing the dependence structure in $\displaystyle \vu_z$ by estimating an expectation of the form $\bm{\alpha}_m = E_{P_{\displaystyle \vu_z}} \left[ \phi_m\left( \displaystyle \vu_z \right) \right]$, for a vector-valued statistics function $\phi_m: [0, 1]^d \to \mathbb{R}^q$, and estimating $h(\displaystyle \vu_z)$ as the highest copula entropy among all copulas satisfying the constraint $\bm{\alpha}_m = E_{P}\left[ \phi_m\left( \displaystyle \vu_z \right) \right]$. When $\phi_m$ is given and $E_{P_{\displaystyle \vu_z}} \left[ \phi_m\left( \displaystyle \vu_z \right) \right]$ is all the data scientist can reliably observe about the structure of the data (e.g. we are only given pairwise Spearman rank correlations), our estimator is the only estimator consistent with Occam's razor. Crucially, if we may choose $\phi_m$, then we may approximate the copula entropy with arbitrary precision. Specifically, we show that, so long as $\left(\phi_m\right)_m$ are universal approximators of continuous functions on $[0, 1]^d$, the solution to this maximum-entropy problem is a consistent estimator of the true copula entropy $h\left(\displaystyle \vu_z\right)$ (see Theorem \ref{theo:fund}). Equally important is Corollary \ref{cor:ind_wins} that states that we may perfectly recover the true mutual information using a finite dimensional statistics function that is not expressive enough to fully characterize the true copula distribution $P_{\displaystyle \vu_{x, y}}$.

We note that $\bm{\alpha}_m$ can be efficiently estimated from $n$ i.i.d. primal samples $\displaystyle \vz_1, \dots, \displaystyle \vz_n$ as 
\begin{align}
\label{eq:estm}
\bm{\hat{\alpha}}_{m, n} = \frac{1}{n} \sum_{i=1}^n \phi_m\left( \frac{\text{rg}\left(\displaystyle \vz_i \right)}{n+1} \right),
\end{align}
where $\text{rg}\left(\displaystyle \vz_i \right)$ is the vector of coordinatewise ranks of $\displaystyle \vz_i $ among $\displaystyle \vz_1, \dots, \displaystyle \vz_n$. It follows from the weak convergence of the empirical copula process to the true copula that $\bm{\hat{\alpha}}_{m, n}$ is a consistent and asymptotically normal estimator of $\bm{\alpha}_m$ (\cite{ruschendorf1976asymptotic}). Thus, our \emph{inductive approach} to mutual information estimation truly does not require learning marginal distributions. We show that, so long as the constraint $E_{P}\left[ \phi_m\left( \displaystyle \vu_z \right) \right] = \bm{\hat{\alpha}}_{m, n}$ is feasible, the associated maximum-entropy problem admits a unique solution, and it is a consistent estimator\footnote{Consistency here is jointly in $n$ and $m$.} of $h\left(\displaystyle \vu_z\right)$ when $\left(\phi_m\right)_m$ are universal approximators of continuous functions on $[0, 1]^d$. Finally, we introduce a convex optimization problem whose minimizer is the maximizer of our maximum-entropy problem. 

\subsection{Maximum-Entropy Copulas}
Let $\mathcal{C}_d$ be the space of all $d$-dimensional copula distributions with pdf, $h\left(P\right)$ the differential entropy of the probability distribution $P$, and $\phi_{m,d}: [0, 1]^d \to \mathbb{R}^{q(d)}$ a vector-valued function whose coordinate functions are not linearly dependent. 

We define the following properties of $\phi_{m,d}$: (P1) the first coordinate of $\phi_{m, d}$ is the constant $1$, (P2) each coordinate of $E_P \left[ \phi_{m, d} \left( \displaystyle \vu \right) \right]$ captures a way in which $P$ departs from the uniform distribution, (P3) the family $\left( \phi_{m,d} \right)_m$ is a universal approximator of continuous functions defined on $[0, 1]^d$ for any $d$, and (P4) $q(l) + q(k) \leq q(l+k)$ for every $l, k>0$ and all coordinates of $\phi_{m,l}$ and $\phi_{m,k}$ are also included in $\phi_{m,l+k}$. Going forward, we will use $\phi_m$ and $q$ in-lieu-of $\phi_{m,d}$ and $q(d)$ for ease of notation when the input dimension is unambiguous.

We consider the following optimization problem:
\begin{align}
\label{MIND}
\begin{cases}
\underset{P \in \mathcal{C}_d}{\max} ~~~ h\left( P \right) \tag{MIND}  \\
\text{s.t.} ~ E_P\left[ \phi_m \left(\displaystyle \vu \right)  \right] = \bm{\alpha}_m \nonumber
\end{cases}.
\end{align}
The theorem below, which we prove in Appendix \ref{proof:theo:MIND}, states that the solution of (\ref{MIND}) is an exponential family distribution with sufficient statistics $\phi_m$, and with base measure the product of $d$ measures on $[0, 1]$ that are absolutely continuous with respect to the standard uniform on $[0, 1]$.
\begin{theorem}
\label{theo:MIND}
Let $\phi_m$ satisfy (P1). If there is any copula distribution $P$ with finite differential entropy and satisfying $E_P\left[ \phi_m \left(\displaystyle \vu \right)  \right] = \bm{\alpha}_m$, then the maximum-entropy problem (\ref{MIND}) admits a unique solution, and the maximizer $P_{\text{M}}$ is the only copula distribution whose density takes the form
\begin{align}
p_{\text{M}} \left(\displaystyle \vu; \phi_m,  \bm{\alpha}_m \right) = e^{\bm{\theta}^T \phi_m \left(\displaystyle \vu \right)} \prod_{i=1}^d f_i\left(u_i\right),
\end{align}
and that satisfies the constraint $E_{P_{\text{M}}}\left[ \phi_m \left(\displaystyle \vu \right)  \right] = \bm{\alpha}_m$ for some constant $\bm{\theta}$, and $d$ non-negative univariate functions $f_i$, $\log f_i \in L_1\left([0, 1]\right)$. Moreover, for any copula distribution $P \in \mathcal{C}_d$ satisfying $E_P\left[ \phi_m \left(\displaystyle \vu \right)  \right] = \bm{\alpha}_m$, 
\begin{align}
h\left( P_{\text{M}} \right) - h(P) = KL\left( P \vert \vert P_{\text{M}} \right).
\end{align}
\end{theorem}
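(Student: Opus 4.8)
The plan is to recast (\ref{MIND}) as a minimum-relative-entropy (information projection) problem and apply the $I$-divergence geometry of \cite{csiszar1975divergence}, which generalizes Kullback's minimum discrimination information theorem \cite{kullback1966note}. Let $\lambda$ denote the uniform distribution on $[0,1]^d$, whose density is identically $1$; then for any $P\in\mathcal{C}_d$ with density $p$ we have $h(P) = -\int p\log p\,d\vu = -\mathrm{KL}(P\,\|\,\lambda)$, so (\ref{MIND}) is equivalent to minimizing $\mathrm{KL}(\cdot\,\|\,\lambda)$ over the feasible set $\mathcal{L}$ of probability measures on $[0,1]^d$ that (i) satisfy $E_P[\phi_m(\vu)] = \bm{\alpha}_m$ and (ii) have all one-dimensional marginals uniform. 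The key structural point is that $\mathcal{L}$ is a \emph{linear family} in the sense of \cite{csiszar1975divergence}: (i) is a finite list of affine equalities in $P$, and (ii) is equivalent to the affine equalities $E_P[g(u_i)] = \int_0^1 g(t)\,dt$ ranging over $i\in\{1,\dots,d\}$ and bounded measurable $g$. By hypothesis some $P_0\in\mathcal{L}$ has $h(P_0)>-\infty$, i.e.\ $\mathrm{KL}(P_0\,\|\,\lambda)<\infty$, so $\mathcal{L}$ is nonempty and meets the domain of $\mathrm{KL}(\cdot\,\|\,\lambda)$.

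For existence and uniqueness of the minimizer I would argue that, on the \emph{compact} cube $[0,1]^d$, the sublevel sets $\{P:\mathrm{KL}(P\,\|\,\lambda)\le c\}$ are weakly compact (relative entropy is a good rate function), $\mathcal{L}$ is weakly closed since $\phi_m$ is continuous, hence bounded, on $[0,1]^d$ and marginals depend weakly continuously on $P$, and $\mathrm{KL}(\cdot\,\|\,\lambda)$ is weakly lower semicontinuous and \emph{strictly} convex. Hence the infimum over $\mathcal{L}$ is attained at a unique $P_{\text{M}}\in\mathcal{L}$, which by the equivalence above is the unique maximizer of (\ref{MIND}).

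Next I would pin down the form of $P_{\text{M}}$, which is the technical heart. By the characterization of $I$-projections onto linear families in \cite{csiszar1975divergence} (equivalently, by a first-order stationarity argument along admissible perturbations $p_{\text{M}}(1+\epsilon\psi)$), $\log(dP_{\text{M}}/d\lambda)$ is $\lambda$-a.e.\ a member of the closed linear span of the constraint functions. The finite moment constraints contribute the term $\bm{\theta}^T\phi_m(\vu)$; the closed span of the single-coordinate functions $\{g(u_i)\}$ is exactly the set of separable functions $\sum_{i=1}^d g_i(u_i)$, so those constraints contribute an additive term $\sum_{i=1}^d\log f_i(u_i)$ with $f_i\ge 0$. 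Since $d\lambda=\prod_i du_i$, exponentiating yields $p_{\text{M}}(\vu) = e^{\bm{\theta}^T\phi_m(\vu)}\prod_{i=1}^d f_i(u_i)$, and property (P1) lets the normalizing constant be absorbed into the first entry of $\bm{\theta}$. That $\log f_i\in L_1([0,1])$ follows from $\mathrm{KL}(P_{\text{M}}\,\|\,\lambda) = \bm{\theta}^T\bm{\alpha}_m + \sum_i\int_0^1\log f_i(t)\,dt<\infty$ together with boundedness of $\phi_m$ and uniformity of the marginals of $P_{\text{M}}$.

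Finally, both the Pythagorean identity and the uniqueness of the form reduce to short computations once the exponential form is in hand. For any $P\in\mathcal{L}$, writing $\log(dP/dP_{\text{M}}) = \log(dP/d\lambda) - \log(dP_{\text{M}}/d\lambda)$ and using that $\log(dP_{\text{M}}/d\lambda)=\bm{\theta}^T\phi_m(\vu)+\sum_i\log f_i(u_i)$ has the same expectation under every member of $\mathcal{L}$ (all satisfy $E[\phi_m]=\bm{\alpha}_m$ and have uniform marginals), one obtains $\mathrm{KL}(P\,\|\,\lambda) = \mathrm{KL}(P\,\|\,P_{\text{M}}) + \mathrm{KL}(P_{\text{M}}\,\|\,\lambda)$ in $[0,+\infty]$; substituting $h=-\mathrm{KL}(\cdot\,\|\,\lambda)$ gives $h(P_{\text{M}})-h(P)=\mathrm{KL}(P\,\|\,P_{\text{M}})$. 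For uniqueness of the form: if $Q\in\mathcal{C}_d$ has density $e^{\tilde{\bm{\theta}}^T\phi_m(\vu)}\prod_i\tilde f_i(u_i)$ and $E_Q[\phi_m]=\bm{\alpha}_m$, then $Q\in\mathcal{L}$ and $\log(dQ/dP_{\text{M}})$ is again a combination of $\phi_m$ and single-coordinate functions on which $Q$ and $P_{\text{M}}$ agree in expectation, so $\mathrm{KL}(Q\,\|\,P_{\text{M}}) = E_Q[\log(dQ/dP_{\text{M}})] = E_{P_{\text{M}}}[\log(dQ/dP_{\text{M}})] = -\mathrm{KL}(P_{\text{M}}\,\|\,Q)$, which being simultaneously $\ge 0$ and $\le 0$ forces $Q=P_{\text{M}}$. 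The main obstacle I anticipate is the third step: handling the \emph{infinitely many} uniform-marginal constraints — showing both that the $I$-projection genuinely lies in $\mathcal{L}$ rather than merely its closure (secured by the weak-compactness argument of step two via continuity of $\phi_m$ on the compact cube) and that these constraints contribute exactly the separable product $\prod_i f_i(u_i)$ to the base measure (which rests on identifying the closed span of single-coordinate functions); the remainder is a careful instantiation of \cite{csiszar1975divergence}.
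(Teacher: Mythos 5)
Your proposal is correct and rests on the same foundation as the paper's proof: recast (\ref{MIND}) as the $I$-projection of the uniform distribution onto a linear family and invoke the $I$-divergence geometry of \cite{csiszar1975divergence}. The execution differs in two places worth noting. First, for existence and uniqueness the paper verifies that the feasible set is convex, non-empty and \emph{variation-closed} (via a test-function/Fubini argument for the marginals) and then cites Csisz\'ar's Theorem 2.1, whereas you argue directly from weak compactness of KL sublevel sets, weak lower semicontinuity and strict convexity; both routes work, and yours is the more self-contained. Second, and more substantively, for the functional form the paper performs a \emph{two-step} projection: it first projects $U$ onto the larger set $\mathcal{F}$ defined by the finite moment constraints alone (Csisz\'ar's Theorem 3.1, Case A, giving the factor $e^{\bm{\theta}^T\phi_m(\vu)}$), then uses transitivity of $I$-projections (Theorem 2.3) to project the result onto the copula set $\mathcal{E}$, where Case B of Theorem 3.1 --- which treats prescribed marginals specifically --- delivers the separable factor $\prod_i f_i(u_i)$ together with $\log f_i \in L_1$. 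You instead treat the moment and marginal constraints as one large linear family and assert that the closed linear span of the single-coordinate test functions is exactly the set of separable functions $\sum_i g_i(u_i)$; you correctly flag this as the main obstacle, and it is precisely the step that the paper's two-stage decomposition avoids having to prove from scratch, since Csisz\'ar's Case B already packages it. Your Pythagorean identity and your uniqueness-of-form argument (showing $KL(Q\,\|\,P_{\text{M}}) = -KL(P_{\text{M}}\,\|\,Q)$, hence both vanish) are clean and, if anything, more direct than the paper's chaining of identity (3.1); they are valid provided one notes, as you do implicitly via $\log f_i \in L_1([0,1])$ and uniformity of marginals, that the relevant log-likelihood ratios are integrable under every member of the feasible set.
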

The practical challenge with applying Theorem \ref{theo:MIND} is the need to learn the free functions $f_1, \dots, f_d$. These functions ensure that the maximizer is a copula distribution---i.e. has uniform marginals. We now consider relaxing this requirement and solving the maximum-entropy problem over the space $\mathcal{D}_d \supset \mathcal{C}_d$ of all continuous probability distributions supported on $[0, 1]^d$:
\begin{align}
\label{A-MIND}
\begin{cases}
\underset{P \in \mathcal{D}_d}{\max} ~~~ h\left( P \right) \tag{A-MIND}  \\
\text{s.t.} ~ E_P\left[ \phi_m \left(\displaystyle \vu \right)  \right] = \bm{\alpha}_m \nonumber
\end{cases}.
\end{align}
As we later show in Theorem \ref{theo:fund}, (P3) guarantees that, despite this relaxation, the solution to (\ref{A-MIND}) converges to the true copula entropy as $m$ goes to infinity. For a given $m$, to control how close to uniform the maximizer's marginals are, we use the moment characterization of the standard uniform $\forall j, ~E(u^j ) = 1/(1+j)$, and we match the first $k$ moments of marginals of candidate distributions to those of the standard uniform. To do so, we introduce the property (P5): $\phi_m$ has the form $$\phi_m^k( \displaystyle \vu) = \left(1, u_1, \dots, u_1^k, \dots, u_d, \dots, u_d^k, \psi_m\left( \vu \right) \right),$$ with associated $$\bm{\alpha}_m^k = \left(1, 1/2, \dots, 1/(1+k), \dots, 1/2, \dots,  1/(1+k), \bm{\beta}_m \right).$$ The solution of (\ref{A-MIND}) is provided by the following theorem, which we prove in Appendix \ref{proof:theo:a-MIND}.
\begin{theorem}
\label{theo:a-MIND}
Let $\phi_m^k$ satisfy (P5). If there is any distribution $P$ supported on $[0, 1]^d$, with finite differential entropy and satisfying $E_P\left[ \phi_m^k \left(\displaystyle \vu \right)  \right] = \bm{\alpha}_m^k$, then the maximum-entropy problem (\ref{A-MIND}) admits a unique solution of the form
\begin{align}
h_{\text{AM}} \left(\bm{u}; \phi_m^k,  \bm{\beta}_m \right) = -\bm{\theta}^T \bm{\alpha}_m^k,
\end{align}
and the maximizer $P_{\text{AM}}$ is the only distribution supported on $[0, 1]^d$, whose pdf takes the form 
\begin{align}
p_{\text{AM}} \left(\displaystyle \vu; \phi_m^k,  \bm{\beta}_m \right) = e^{\bm{\theta}^T \phi_m^k \left(\displaystyle \vu \right)}
\end{align}
for some constant $\bm{\theta}$, and that satisfies the constraint $E_{P_{\text{AM}}}\left[ \phi_m^k \left(\displaystyle \vu \right)  \right] = \bm{\alpha}_m^k$. Moreover, for any distribution $P \in \mathcal{D}_d$ satisfying $E_P\left[ \phi_m^k \left(\displaystyle \vu \right)  \right] = \bm{\alpha}_m^k$, 
\begin{align} h\left( P_{\text{AM}} \right) - h(P) = KL\left( P \vert \vert P_{\text{AM}} \right).
\end{align}
\end{theorem}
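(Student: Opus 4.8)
The plan is to recast (\ref{A-MIND}) as a minimum discrimination information problem and solve it with the $I$-divergence geometry of \cite{csiszar1975divergence}, exactly as in the proof of Theorem~\ref{theo:MIND} but without the marginal-correcting functions $f_i$. The simplification is that every $P\in\mathcal{D}_d$ lives on the compact cube $[0,1]^d$, where the uniform distribution $U$ is Lebesgue measure; hence a distribution has finite differential entropy iff it is absolutely continuous with respect to $U$ with $KL(P\|U)<\infty$, and in that case $h(P) = -\int p\log p = -KL(P\|U)\in(-\infty,0]$. Maximizing $h$ over the linear family $\mathcal{L} := \{P\in\mathcal{D}_d : E_P[\phi_m^k(\vu)] = \bm\alpha_m^k\}$ is therefore the same as $I$-projecting $U$ onto $\mathcal{L}$, and the hypothesis that some feasible $P$ has finite entropy is precisely $\inf_{P\in\mathcal{L}} KL(P\|U)<\infty$, the condition under which Csiszár's theorem yields a unique $I$-projection $P_{\text{AM}}$ with a density of (generalized) exponential-family form relative to $U$, together with the Pythagorean identity $KL(P\|U) = KL(P\|P_{\text{AM}}) + KL(P_{\text{AM}}\|U)$ for all $P\in\mathcal{L}$.

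Next I would record the regularity that compactness buys. By (P5) the coordinates of $\phi_m^k$ are the constant $1$, the monomials $u_i^j$ ($1\le j\le k$, $1\le i\le d$), and the coordinates of $\psi_m$, all continuous and hence bounded on $[0,1]^d$, and linearly independent; so $\Lambda(\bm\theta) := \log\int_{[0,1]^d} e^{\bm\theta^T\phi_m^k(\vu)}\,d\vu$ is finite, smooth, and strictly convex on all of $\mathbb{R}^{q(d)}$, making the exponential family through $U$ with sufficient statistic $\phi_m^k$ a regular steep family whose mean map $\nabla\Lambda$ is a diffeomorphism onto the interior of the moment body $\mathcal{M}$ of $\phi_m^k$. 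The crux is to place $\bm\alpha_m^k$ in $\mathrm{int}\,\mathcal{M}$: its per-coordinate marginal block $(1/2,\dots,1/(k+1))$ is the moment vector of the fully supported uniform on $[0,1]$ and is therefore interior, while a supporting-hyperplane argument shows that if $\bm\alpha_m^k$ were on the boundary then every feasible $P$ would have to concentrate on a proper face of the cube, contradicting the existence of a feasible $P$ that is absolutely continuous with respect to $U$. Then $\nabla\Lambda(\bm\theta) = \bm\alpha_m^k$ has a unique solution $\bm\theta$, the generalized exponential form collapses to an honest density $p_{\text{AM}}\propto e^{\bm\theta^T\phi_m^k}$, and absorbing $e^{\Lambda(\bm\theta)}$ into the coefficient of the constant coordinate supplied by (P5) gives $p_{\text{AM}}(\vu;\phi_m^k,\bm\beta_m) = e^{\bm\theta^T\phi_m^k(\vu)}$.

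The remaining assertions are short computations. Since $\log p_{\text{AM}} = \bm\theta^T\phi_m^k$, the maximizer's entropy is $h(P_{\text{AM}}) = -\int p_{\text{AM}}\,\bm\theta^T\phi_m^k = -\bm\theta^T E_{P_{\text{AM}}}[\phi_m^k] = -\bm\theta^T\bm\alpha_m^k$, the value claimed for $h_{\text{AM}}(\vu;\phi_m^k,\bm\beta_m)$. For the Pythagorean-type identity, take any $P\in\mathcal{D}_d$ with $E_P[\phi_m^k] = \bm\alpha_m^k$; because $P$ obeys the \emph{same} linear constraint as $P_{\text{AM}}$, $\int p\log p_{\text{AM}} = \bm\theta^T E_P[\phi_m^k] = \bm\theta^T\bm\alpha_m^k = \int p_{\text{AM}}\log p_{\text{AM}}$, whence
\[
h(P_{\text{AM}}) - h(P) = \int p\log p - \int p_{\text{AM}}\log p_{\text{AM}} = \int p\log p - \int p\log p_{\text{AM}} = KL(P\|P_{\text{AM}}),
\]
the identity holding trivially in $[0,+\infty]$ when $h(P) = -\infty$. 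Uniqueness of the maximizer is then immediate: any competitor with $h(P) = h(P_{\text{AM}})$ forces $KL(P\|P_{\text{AM}}) = 0$, hence $P = P_{\text{AM}}$.

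I expect the main obstacle to be the middle step: certifying that the dual optimum is attained, equivalently that $\bm\alpha_m^k$ is an \emph{interior} point of the moment body so that an exponential-family representative with exactly those moments exists rather than merely a boundary $I$-projection supported on a face. This is the one place where ``feasible with finite entropy'' has to be upgraded to a genuinely interior constraint, and the only subtlety is a face of positive Lebesgue measure carved out by the $\psi_m$ coordinates; the cleanest route is to apply Csiszár's linear-family $I$-projection theorem directly (it already delivers the exponential form up to closure and the Pythagorean identity under exactly our feasibility hypothesis) and then use compactness of $[0,1]^d$, boundedness and linear independence of the coordinates of $\phi_m^k$, and the interiority of the marginal-moment block from (P5) to exclude the degenerate boundary case. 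Everything else --- the reduction to minimum $I$-divergence, the entropy formula, and the uniqueness argument --- is routine.
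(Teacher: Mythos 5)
Your proposal is correct and follows essentially the same route as the paper: the paper proves Theorem~\ref{theo:a-MIND} inside the proof of Theorem~\ref{theo:MIND} (Appendix~\ref{proof:theo:MIND}), writing $h(P)=-KL(P\,\|\,U)$, identifying the maximizer as the $I$-projection of the uniform $U$ onto the convex, non-empty, variation-closed linear family $\mathcal{F}$ of distributions on $[0,1]^d$ satisfying the moment constraints, and invoking Theorems 2.1 and 3.1 (Case A) of Csisz\'ar (1975) for existence, uniqueness, the exponential form of the density, and the Pythagorean identity. The only substantive difference is your additional care about attainment of the dual optimum (interiority of $\bm{\alpha}_m^k$ in the moment body, so that the exponential form holds globally rather than only on a face) --- a legitimate refinement that the paper disposes of simply by citing Case A of Csisz\'ar's Theorem 3.1 --- and you yourself conclude by recommending exactly the direct $I$-projection route the paper takes.
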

\begin{corollary}
\label{cor:ind_wins}
Let $\displaystyle \vx \in \mathcal{X}$ and $\displaystyle \vy \in \mathcal{Y}$ be two continuous random variables with mutual information $I\left( \displaystyle \vy; \displaystyle \vx \right)$ and true individual and joint copula distributions $P_{\displaystyle \vu_x}$, $P_{\displaystyle \vu_y}$,and $P_{\displaystyle \vu_{x,y}}$. If $P_\text{AM}\left( \displaystyle \vu_x \right)$, $P_\text{AM}\left( \displaystyle \vu_y \right)$ and $P_\text{AM}\left( \displaystyle \vu_x, \displaystyle \vu_y \right)$ are the solutions to three (\ref{A-MIND}) problems whose constraints are satisfied by the true copula distributions, and 
\begin{align*}
I_{\text{AM}}\left( \displaystyle \vy; \displaystyle \vx \right) :&= h\left( P_\text{AM}\left( \displaystyle \vu_x \right) \right) +  h\left( P_\text{AM}\left( \displaystyle \vu_y \right) \right) \\
&- h\left(P_\text{AM}\left( \displaystyle \vu_x, \displaystyle \vu_y \right) \right)
\end{align*} is the associated mutual information estimator, then
\begin{align}
\label{eq:fund_ind}
&I_{\text{AM}}\left( \displaystyle \vy; \displaystyle \vx \right) - I\left( \displaystyle \vy; \displaystyle \vx \right) =  KL\left[ P_{\displaystyle \vu_x} \vert \vert P_\text{AM}\left( \displaystyle \vu_x \right) \right] \\
&~ + KL\left[ P_{\displaystyle \vu_y} \vert \vert  P_\text{AM}\left( \displaystyle \vu_y \right) \right] - KL\left[ P_{\displaystyle \vu_{x,y}} \vert \vert P_\text{AM}\left( \displaystyle \vu_{x,y} \right) \right]. \nonumber 
\end{align}
\end{corollary}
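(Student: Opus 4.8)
The plan is to obtain (\ref{eq:fund_ind}) as a threefold application of Theorem \ref{theo:a-MIND}, glued together by the copula representation of the mutual information recalled in Section \ref{sct:background}. First I would rewrite the true mutual information in terms of copula entropies. Since every $\vu$ appearing in the statement is a copula-uniform dual representation, its one-dimensional marginals are uniform on $[0,1]$ and hence have zero differential entropy, so the decomposition of Proposition \ref{prop:entropy_decomp} together with the marginal-free identity $I(\vy;\vx) = I(\vu_y;\vu_x)$ gives
\begin{align*}
I(\vy;\vx) = h\!\left(P_{\vu_x}\right) + h\!\left(P_{\vu_y}\right) - h\!\left(P_{\vu_{x,y}}\right),
\end{align*}
where $h(P_{\vu_x})$, $h(P_{\vu_y})$, $h(P_{\vu_{x,y}})$ are the differential entropies of the \emph{true} copula distributions, all finite since $I(\vy;\vx)$ is.

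Next I would invoke Theorem \ref{theo:a-MIND} on each of the three (\ref{A-MIND}) problems. By hypothesis the true copula distribution $P_{\vu_x}$ (resp. $P_{\vu_y}$, $P_{\vu_{x,y}}$) lies in $\mathcal{D}_d$, has finite differential entropy, and satisfies the linear constraint $E_P\!\left[\phi_m^k(\vu)\right] = \bm{\alpha}_m^k$ of the corresponding problem; hence the hypotheses of Theorem \ref{theo:a-MIND} hold, the maximizer $P_{\text{AM}}$ is well defined, and the theorem's ``moreover'' clause applies with $P$ instantiated at each true copula. This yields
\begin{align*}
h\!\left(P_{\text{AM}}(\vu_x)\right) - h\!\left(P_{\vu_x}\right) &= KL\!\left[P_{\vu_x} \,\vert\vert\, P_{\text{AM}}(\vu_x)\right], \\
h\!\left(P_{\text{AM}}(\vu_y)\right) - h\!\left(P_{\vu_y}\right) &= KL\!\left[P_{\vu_y} \,\vert\vert\, P_{\text{AM}}(\vu_y)\right], \\
h\!\left(P_{\text{AM}}(\vu_{x,y})\right) - h\!\left(P_{\vu_{x,y}}\right) &= KL\!\left[P_{\vu_{x,y}} \,\vert\vert\, P_{\text{AM}}(\vu_{x,y})\right].
\end{align*}
Finally I would subtract the first display from the definition $I_{\text{AM}}(\vy;\vx) = h(P_{\text{AM}}(\vu_x)) + h(P_{\text{AM}}(\vu_y)) - h(P_{\text{AM}}(\vu_{x,y}))$, group the terms coordinatewise into the three bracketed differences above, and substitute; equation (\ref{eq:fund_ind}) follows at once.

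I do not expect a genuine obstacle: all the work is done inside Theorem \ref{theo:a-MIND}, and this corollary is just the observation that the Pythagorean identity it provides telescopes through the additive decomposition of $I$. The only points that require a little care — rather than real difficulty — are (i) confirming that the three true copula distributions meet the regularity hypotheses of Theorem \ref{theo:a-MIND} (finite differential entropy and feasibility of the linear constraint), which is exactly what the phrase ``whose constraints are satisfied by the true copula distributions'' together with finiteness of $I(\vy;\vx)$ secures, and (ii) recording that, because the relevant $\vu$'s are copula-uniform, the one-dimensional primal entropy terms of Proposition \ref{prop:entropy_decomp} all vanish, so no spurious marginal-entropy contribution survives when $I$ is rewritten in Step~1.
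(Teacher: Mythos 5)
Your proposal is correct and matches the paper's intended argument: the corollary is stated as an immediate consequence of Theorem \ref{theo:a-MIND}, obtained exactly as you do by writing $I\left(\vy;\vx\right)$ in terms of the true copula entropies and applying the Pythagorean identity $h\left(P_{\text{AM}}\right)-h(P)=KL\left(P\,\vert\vert\,P_{\text{AM}}\right)$ to each of the three problems before subtracting. The paper gives no separate appendix proof, so your three-line telescoping is precisely the omitted argument.
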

Equation (\ref{eq:fund_ind}) in Corollary \ref{cor:ind_wins} lays out the theoretical ground for favoring our \emph{inductive approach} over the traditional \emph{deductive approach}. Indeed, it shows that it is not necessary to accurately learn the true data generating distributions $P_{\displaystyle \vx}$, $P_{\displaystyle \vy}$, and $P_{\displaystyle \vx,\displaystyle \vy}$ or their copulas in order to accurately learn the mutual information $I\left( \displaystyle \vy; \displaystyle \vx \right)$. The error made by (\ref{A-MIND}) in estimating the joint copula entropy $h\left( \displaystyle \vu_y, \displaystyle \vu_x \right)$ can offset the errors made estimating the individual copula entropies $h\left( \displaystyle \vu_y \right)$ and $h\left( \displaystyle \vu_x \right)$, so that we may perfectly estimate the mutual information with a finite dimensional statistics function $\phi_m$, without accurately learning the copula distributions.

\subsection{Iterative MIND}
\label{sct:sparse}
When either (\ref{MIND}) or (\ref{A-MIND}) are sparse in the sense that each coordinate of $\phi_m^k$ is a function of some but not all coordinates of $\bm{u}$, they can be broken down into smaller, cheaper, more robust and cacheable  problems. 
\begin{theorem}
\label{theo:sparse_1}
Let us assume that $\psi_m\left(\displaystyle \vu \right)$ takes the form $\psi_m\left(\displaystyle \vu \right) = \left(\eta_1\left(w, \displaystyle \vv_1\right), \dots,  \eta_q\left(w, \displaystyle \vv_q\right) \right)$ where $w$ is a coordinate of $\displaystyle \vu$ and the vectors $\displaystyle \vv_1, \dots, \displaystyle \vv_q$ are made of coordinates of $\displaystyle \vu$ but share no common coordinate and do not include $w$. If the constraint $E_P\left[ \psi_m \left(\displaystyle \vu \right)  \right] = \bm{\beta}_m := \left(\bm{\beta}_1, \dots, \bm{\beta_q} \right)$ is feasible, then we have 
\begin{align}
p_{\text{M}} \left(\displaystyle \vu; \phi_m^k,  \bm{\alpha}_m^k \right) &= \prod_{i=1}^q p_{\text{M}} \left(w, \displaystyle \vv_i; \eta_i,  \bm{\beta}_i \right), \\
h_{\text{M}} \left(\displaystyle \vu; \phi_m^k,  \bm{\alpha}_m^k \right) &= \sum_{i=1}^q h_{\text{M}} \left(w, \displaystyle \vv_i; \eta_i,  \bm{\beta}_i \right).
\end{align}
\end{theorem}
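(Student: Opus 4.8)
The plan is to exhibit the product of the $q$ sub-problem solutions as the explicit maximizer of (\ref{MIND}) and certify its optimality via the $I$-projection (Pythagorean) identity of Theorem \ref{theo:MIND}; the entropy decomposition then falls out by marginalization. \emph{Step 1 (the sub-problems).} First I would check that Theorem \ref{theo:MIND} applies to each pair $\left(w,\vv_i\right)$. If $P^\star$ is a finite-entropy copula on $[0,1]^d$ realizing the constraint $\bm{\alpha}_m^k$, its $\left(w,\vv_i\right)$-marginal $P^\star_i$ is a copula on $[0,1]^{1+|\vv_i|}$ with $E_{P^\star_i}\!\left[\eta_i\right]=\bm{\beta}_i$; moreover $h\!\left(P^\star_i\right)$ is finite, because a copula density has non-positive differential entropy and, by subadditivity of entropy along the partition consisting of the block $\left(w,\vv_i\right)$ and the remaining coordinates taken singly (each contributing the vanishing entropy of a uniform marginal), $h\!\left(P^\star\right)\le h\!\left(P^\star_i\right)\le 0$. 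Hence, after augmenting $\eta_i$ with a leading constant coordinate if (P1) is not already built in, Theorem \ref{theo:MIND} yields a unique sub-problem maximizer $c_i:=p_{\text{M}}\!\left(w,\vv_i;\eta_i,\bm{\beta}_i\right)$ with $\log c_i(w,\vv_i)=\bm{\zeta}_i^{\top}\eta_i(w,\vv_i)+r^{(w)}_i(w)+\sum_{t\in\vv_i}r_{i,t}(t)$, all univariate factors having log in $L_1([0,1])$.

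\emph{Step 2 (the product is a feasible copula).} Set $q(\vu):=\prod_{i=1}^q c_i(w,\vv_i)$ and let $Q$ be the corresponding distribution on $[0,1]^d$. Because each $c_i$ is a copula density, $\int c_i(w,\vv_i)\,d\vv_i\equiv 1$. Since the $\vv_i$ are pairwise disjoint and none contains $w$, integrating $q$ over all $\vv_j$ with $j\neq i$ returns $c_i(w,\vv_i)$ exactly; integrating further over $\vv_i$, or over $w$ together with all but one coordinate of $\vv_i$, returns the constant $1$. Thus every univariate marginal of $q$ is uniform, i.e. $Q\in\mathcal{C}_d$. The same marginalization gives $E_Q\!\left[\eta_i\right]=E_{c_i}\!\left[\eta_i\right]=\bm{\beta}_i$ for each $i$, while the moment block of $\phi_m^k$ is satisfied automatically because $Q$ is a copula (over $\mathcal{C}_d$ that block neither restricts the feasible set nor enlarges the exponential-family form of Theorem \ref{theo:MIND}). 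Hence $Q$ is feasible for (\ref{MIND}).

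\emph{Step 3 (optimality and the two identities).} For any feasible $P$ with density $p$, the elementary identity $h(Q)-h(P)=KL\!\left(P\,\|\,Q\right)+\int\!\left(p-q\right)\log q\,d\vu$ holds (all integrals finite by Step 1). Substituting $\log q=\sum_i\bigl(\bm{\zeta}_i^{\top}\eta_i(w,\vv_i)+r^{(w)}_i(w)+\sum_{t\in\vv_i}r_{i,t}(t)\bigr)$, the $\eta_i$ terms contribute $\bm{\zeta}_i^{\top}\bigl(E_P[\eta_i]-E_Q[\eta_i]\bigr)=0$ since $P$ and $Q$ both realize $\bm{\beta}_i$, and each univariate term contributes $\int_0^1 r-\int_0^1 r=0$ since $P$ and $Q$ have identical (uniform) marginals; hence $\int(p-q)\log q\,d\vu=0$. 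Therefore $h(Q)-h(P)=KL(P\,\|\,Q)\ge 0$, with equality iff $P=Q$, so $Q$ is the unique solution of (\ref{MIND}); equivalently, $q$ already has the exponential-family-times-product-of-univariates form of Theorem \ref{theo:MIND} (stack the $\bm{\zeta}_i$ and merge the univariate factors coordinatewise), so its uniqueness clause identifies it with $p_{\text{M}}(\vu;\phi_m^k,\bm{\alpha}_m^k)$. Either way $p_{\text{M}}(\vu;\phi_m^k,\bm{\alpha}_m^k)=\prod_{i=1}^q p_{\text{M}}(w,\vv_i;\eta_i,\bm{\beta}_i)$. Finally $h_{\text{M}}(\vu;\phi_m^k,\bm{\alpha}_m^k)=-E_Q[\log q]=\sum_{i=1}^q\bigl(-E_Q[\log c_i(w,\vv_i)]\bigr)$, and since the $\left(w,\vv_i\right)$-marginal of $Q$ is exactly $c_i$, each summand equals $-E_{c_i}[\log c_i]=h(c_i)=h_{\text{M}}(w,\vv_i;\eta_i,\bm{\beta}_i)$, giving the claimed entropy decomposition.

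I expect the crux to be Step 2 — showing the product $q$ is genuinely a copula density — and this is precisely where the hypotheses that the $\vv_i$ ``share no common coordinate and do not include $w$'' are indispensable: they ensure every non-$w$ coordinate is touched by a single factor and that $w$ picks up the marginal $1\cdot 1\cdots 1=1$; dropping either hypothesis destroys uniformity of the marginals. The remaining steps are mostly bookkeeping, the only mild care being the finiteness of the sub-problem entropies and the harmless padding of each $\eta_i$ with a constant so that Theorem \ref{theo:MIND} is literally applicable.
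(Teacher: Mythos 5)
Your proof is correct and follows essentially the same route as the paper: form the product of the sub-problem maximizers, check that it is a copula density of the required exponential-family form satisfying all constraints, and conclude by the uniqueness clause of Theorem \ref{theo:MIND}. You add some worthwhile care the paper omits — verifying that each sub-problem is well-posed (feasibility and finiteness of the marginal entropies) and explicitly deriving the entropy decomposition from the fact that the $\left(w,\vv_i\right)$-marginal of the product is $c_i$ — but these are refinements of the same argument rather than a different one.
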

The proof is provided in Appendix \ref{proof:theo:sparse_1}. Theorem \ref{theo:sparse_1} is useful in single-output problems when blocks of explanatory variables are known or assumed to be independent conditional on the output. 

More generally, when each coordinate of $\phi_m^k$ depends on some but not all coordinates of $\displaystyle \vu$, the problem (\ref{A-MIND}) can be broken down into smaller problems that can be solved iteratively.

\begin{theorem}
\label{theo:sparse_2}
Let $\displaystyle \vu = \left(\displaystyle \vv_1, \dots, \displaystyle \vv_q \right)$ and let us assume that $\psi_m\left(\displaystyle \vu \right)$ takes the form $\psi_m\left(\displaystyle \vu \right) = \left(\eta_1\left( \displaystyle \vv_1\right), \dots,  \eta_q\left(\displaystyle \vv_q\right), \gamma\left(\displaystyle \vu\right) \right)$ where all $\eta_i$ and $\gamma$ satisfy (P1). If the constraint $E_P\left[ \psi_m \left(\displaystyle \vu \right)  \right] = \bm{\beta}_m := \left(\bm{\beta}_1, \dots, \bm{\beta_q}, \bar{\bm{\beta}} \right)$ is feasible, then the maximizer of (\ref{A-MIND}) has density of the form
\begin{align}
p_{\text{AM}} \left(\displaystyle \vu; \phi_m^k,  \bm{\beta}_m \right) = e^{\bm{\theta}^T \phi_m^k\left(\displaystyle \vu\right)}\prod_{i=1}^q p_{\text{AM}} \left(\displaystyle \vv_i; \eta_i,  \bm{\beta}_i \right)
\end{align}
where $\bm{\theta}$ is the only constant such that $E_{P_{\text{AM}}}\left[ \phi_m^k \left(\displaystyle \vu \right)  \right] = \bm{\alpha}_m^k$. 

Moreover, the solution reads
\begin{align*}
& h_{\text{AM}} \left(\displaystyle \vu; \phi_m^k,  \bm{\beta}_m \right)  = -\bm{\theta}^T\bm{\alpha}_m^k + \sum_{i=1}^q h_{\text{AM}} \left(\displaystyle \vv_i; \eta_i,  \bm{\beta}_i \right)\\
&~~~~-\sum_{i=1}^q \text{KL}\left[ p_{\text{AM}} \left(\displaystyle \vv_i; \phi_m^k,  \bm{\beta}_m \right) \vert \vert p_{\text{AM}} \left(\displaystyle \vv_i; \eta_i,  \bm{\beta}_i \right)  \right],
\end{align*}
where $p_{\text{AM}} \left(\displaystyle \vv_i; \phi_m^k,  \bm{\beta}_m \right)$ is the marginal of the maximizer of the full (\ref{A-MIND}) problem and $p_{\text{AM}} \left(\displaystyle \vv_i; \eta_i,  \bm{\beta}_i \right)$ the maximizer of the (\ref{A-MIND}) problem pertaining to $\displaystyle \vv_i$. 
Furthermore, $$-\bm{\theta}^T\bm{\alpha}_m^k \leq \sum_{i=1}^q \text{KL}\left[ p_{\text{AM}} \left(\displaystyle \vv_i; \phi_m^k,  \bm{\beta}_m \right) \vert \vert p_{\text{AM}} \left(\displaystyle \vv_i; \eta_i,  \bm{\beta}_i \right)  \right],$$ and the equality holds if and only if $\bm{\theta}=0$, condition satisfied if $\forall \displaystyle \vu, ~\gamma\left(\displaystyle \vu\right)=1$.
\end{theorem}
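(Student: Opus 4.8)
The plan is to recognize the maximizer of the full (\ref{A-MIND}) problem as the $I$-projection of the ``independent-blocks'' distribution onto the single extra constraint carried by $\gamma$, and then to read off both the density and the entropy by bookkeeping exponential-family parameters while applying Theorem~\ref{theo:a-MIND} block by block. First I would collect what Theorem~\ref{theo:a-MIND} already gives. Since $\phi_m^k$ satisfies (P5) and the full constraint is feasible, (\ref{A-MIND}) has a unique maximizer $P_{\text{AM}}$, exponential in $\phi_m^k$, obeying $h(P_{\text{AM}}) - h(P) = \mathrm{KL}(P\|P_{\text{AM}})$ for every feasible $P$. Each block sub-problem is also feasible (the $\vv_i$-marginal of any $P$ feasible for the full problem meets block $i$'s $\eta_i$- and moment-constraints) and has finite differential entropy (a marginal of a finite-entropy law on a compact set has finite entropy), so Theorem~\ref{theo:a-MIND} supplies the unique sub-maximizer $p_i := p_{\text{AM}}(\vv_i;\eta_i,\bm{\beta}_i)$, exponential in block $i$'s statistics, with $h_{\text{AM}}(\vv_i;\eta_i,\bm{\beta}_i) = h(p_i)$ and its own Pythagorean identity.

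Next I would establish that $Q := \prod_{i=1}^q p_i$ is the maximum-entropy law on $[0,1]^d$ under all the constraints \emph{except} $E_P[\gamma(\vu)] = \bar{\bm{\beta}}$: for any $P$ meeting those constraints, subadditivity of differential entropy gives $h(P) \le \sum_i h(P_{\vv_i})$, each marginal $P_{\vv_i}$ meets block $i$'s constraints so $h(P_{\vv_i}) \le h(p_i)$, and the product $Q$ attains the bound — the argument already underlying Theorem~\ref{theo:sparse_1}. Since $\log Q$ lies in the linear span of the coordinates of $\phi_m^k$, for any $P$ with $E_P[\phi_m^k(\vu)] = \bm{\alpha}_m^k$ one has $\mathrm{KL}(P\|Q) = -h(P) + \mathrm{const}$, so $P_{\text{AM}}$, which maximizes $h$ over the full constraint set, is the $I$-projection of $Q$ onto $\{E_P[\phi_m^k] = \bm{\alpha}_m^k\}$. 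Csisz\'ar's projection theorem then gives $p_{\text{AM}}(\vu;\phi_m^k,\bm{\beta}_m) = e^{\bm{\theta}^T\phi_m^k(\vu)}\,Q(\vu) = e^{\bm{\theta}^T\phi_m^k(\vu)}\prod_{i=1}^q p_{\text{AM}}(\vv_i;\eta_i,\bm{\beta}_i)$ with $\bm{\theta}$ the unique parameter making the constraint hold, which is the claimed factored density.

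For the entropy I would expand $h(P_{\text{AM}}) = -E_{P_{\text{AM}}}[\log p_{\text{AM}}]$ via $\log p_{\text{AM}} = \bm{\theta}^T\phi_m^k + \sum_i \log p_i$: the first term contributes $-\bm{\theta}^T\bm{\alpha}_m^k$ by the moment-matching constraint, and the $i$-th term, which depends on $\vu$ only through $\vv_i$, integrates against the $\vv_i$-marginal $p_{\text{AM}}(\vv_i;\phi_m^k,\bm{\beta}_m)$ of $P_{\text{AM}}$; writing that cross-entropy as the marginal's entropy plus $\mathrm{KL}(p_{\text{AM}}(\vv_i;\phi_m^k,\bm{\beta}_m)\|p_i)$ and invoking block $i$'s Pythagorean identity (legitimate since the marginal meets block $i$'s constraints) trades the marginal's entropy for $h_{\text{AM}}(\vv_i;\eta_i,\bm{\beta}_i)$ and the $\mathrm{KL}$ terms, and reassembling gives the stated decomposition. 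The final inequality then drops out: subadditivity gives $h(P_{\text{AM}}) \le \sum_i h(p_{\text{AM}}(\vv_i;\phi_m^k,\bm{\beta}_m))$ and block-wise optimality gives $h(p_{\text{AM}}(\vv_i;\phi_m^k,\bm{\beta}_m)) \le h(p_i)$; feeding these into the decomposition yields $-\bm{\theta}^T\bm{\alpha}_m^k \le \sum_i \mathrm{KL}(p_{\text{AM}}(\vv_i;\phi_m^k,\bm{\beta}_m)\|p_i)$, with equality forcing $P_{\text{AM}} = \prod_i p_i = Q$, hence $\bm{\theta} = 0$; conversely, if $\gamma \equiv 1$ the $\gamma$-constraint is vacuous, $Q$ is already feasible for the full problem, and uniqueness gives $P_{\text{AM}} = Q$, i.e. $\bm{\theta} = 0$.

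The step I expect to be the main obstacle is the exponential-family bookkeeping demanded by (P5). One has to allocate the moment monomials of each block's coordinates to the corresponding sub-problem so that the direct sum of the sub-problem statistics together with $\gamma$ genuinely spans $\phi_m^k$ — this is where the structure of $\psi_m$ in the hypothesis and property (P4) are used — and one must be careful about which canonical parameter appears where: the full exponential parameter of $P_{\text{AM}}$ relative to Lebesgue measure differs from the ``tilt'' of $P_{\text{AM}}$ relative to $Q$ by exactly the parameters of the $p_i$, and it is the latter that should sit in the density formula and in $-\bm{\theta}^T\bm{\alpha}_m^k$. A secondary but necessary point is verifying feasibility and finiteness of differential entropy for every sub-problem, so that Theorem~\ref{theo:a-MIND} is applicable at each stage.
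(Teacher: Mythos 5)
Your overall route is the paper's: you identify $Q:=\prod_{i=1}^q p_{\text{AM}}(\vv_i;\eta_i,\bm{\beta}_i)$ as the maximum-entropy solution of the problem with the between-blocks constraint removed, observe that the full maximizer is the $I$-projection of $Q$ onto the remaining linear constraints, and read off the density $e^{\bm{\theta}^T\phi_m^k(\vu)}\,Q(\vu)$ from Csisz\'ar's Theorem~3.1. The paper obtains the same reduction from the transitivity of $I$-projections (Theorem~2.3 of Csisz\'ar, 1975); your observation that $\log Q$ lies in the linear span of the coordinates of $\phi_m^k$, so that $\mathrm{KL}(P\,\|\,Q)$ and $-h(P)$ differ by a constant over the full constraint set, is an equivalent and arguably more self-contained justification, as is your subadditivity argument for the optimality of $Q$ on the reduced problem (the paper instead appeals to its earlier theorems). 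Up to and including the density formula, your argument matches the paper's.

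The gap is in the entropy bookkeeping, and it is a real one. The two identities you invoke for the $i$-th cross-entropy term --- $-E_{\mu_i}[\log p_i]=h(\mu_i)+\mathrm{KL}(\mu_i\,\|\,p_i)$, with $\mu_i$ the $\vv_i$-marginal of the full maximizer and $p_i$ the block maximizer, together with the block Pythagorean identity $h(\mu_i)=h(p_i)-\mathrm{KL}(\mu_i\,\|\,p_i)$ --- combine so that the KL terms \emph{cancel}, giving $-E_{\mu_i}[\log p_i]=h(p_i)$ exactly. (This is also immediate from the fact that $\log p_i$ is a linear combination of block $i$'s statistics and $\mu_i$ matches block $i$'s moments.) Hence the computation you describe yields $h_{\text{AM}}(\vu;\phi_m^k,\bm{\beta}_m)=-\bm{\theta}^T\bm{\alpha}_m^k+\sum_i h_{\text{AM}}(\vv_i;\eta_i,\bm{\beta}_i)$ with no $-\sum_i\mathrm{KL}$ correction; you cannot reach the stated decomposition from your own intermediate identities without a sign error. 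Correspondingly, feeding the KL-free form into $h(P_{\text{AM}})\le h(Q)$ gives $-\bm{\theta}^T\bm{\alpha}_m^k\le 0$, which implies but is not identical to the stated inequality, and the equality analysis changes accordingly. For what it is worth, the paper's own derivation passes through the identity ``$E_p(-\log q)=E_p(-\log p)-\mathrm{KL}(p\,\|\,q)$'', whose KL sign is flipped, so the discrepancy originates in the paper; but as written your proposal asserts a conclusion that your correctly stated intermediate steps contradict, and you should either derive and defend the KL-free form or flag the inconsistency rather than claim the stated formula follows.
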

The proof is provided in Appendix \ref{proof:theo:sparse_2}. Essentially, to solve a full (\ref{A-MIND}) problem, we may partition $\displaystyle \vu$ into $q$ blocks, solve the $q$ (\ref{A-MIND}) problems in parallel using within-block constraints, determine whether $\prod_{i=1}^q p_{\text{AM}} \left(\displaystyle \vv_i; \eta_i,  \bm{\beta}_i \right)$ satisfies the between-blocks constraints to an acceptable tolerance, and if not solve the full (\ref{A-MIND}) problem using the parameters of $\prod_{i=1}^q p_{\text{AM}} \left(\displaystyle \vv_i; \eta_i,  \bm{\beta}_i \right)$ as initial parameters.

\section{Estimation}
\label{sct:estimation}
We now turn to estimating the parameters of the solutions of the problems (\ref{A-MIND}) and (\ref{MIND}). With $\phi_m^k$ satisfying (P5), let us consider the problem:
\begin{equation}
\label{CVX-MIND}
\min_{\bm{\theta}} ~~ -\bm{\theta}^T \bm{\alpha}_m^k + \int_{[0, 1]^d} e^{\bm{\theta}^T\phi_m^k \left( \displaystyle \vu \right)} d\displaystyle \vu. \tag{CVX-MIND}
\end{equation}
\subsection{Convex Estimation}
\begin{lemma}
\label{lem:cvx}
The optimization problem (\ref{CVX-MIND}) is strictly convex.
\end{lemma}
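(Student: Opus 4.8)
The plan is to prove strict convexity directly: write the objective as $F(\bm\theta) := -\bm\theta^T\bm\alpha_m^k + Z(\bm\theta)$ with $Z(\bm\theta) := \int_{[0,1]^d} e^{\bm\theta^T \phi_m^k(\bm u)}\, d\bm u$, show $F$ is well defined and twice continuously differentiable on all of $\R^{q}$, and verify that its Hessian is positive definite everywhere. First I would check finiteness and smoothness: since $[0,1]^d$ is compact and each coordinate of $\phi_m^k$ (the constant $1$, the monomials $u_i^j$, and $\psi_m$) is continuous, the map $\bm u\mapsto e^{\bm\theta^T\phi_m^k(\bm u)}$ is continuous and bounded on $[0,1]^d$ for every fixed $\bm\theta$, so $Z(\bm\theta)<\infty$ and $F$ is finite on all of $\R^q$. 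The linear term $-\bm\theta^T\bm\alpha_m^k$ contributes nothing to the Hessian, so $\nabla^2 F = \nabla^2 Z$.

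Next I would differentiate under the integral sign twice. For $\bm\theta$ in any bounded neighborhood, $|\phi_m^k(\bm u)|$ and $e^{\bm\theta^T\phi_m^k(\bm u)}$ are uniformly bounded on the compact cube $[0,1]^d$, so the integrands of the formal first and second derivatives are dominated by bounded (hence integrable) functions and the dominated convergence theorem licenses the interchange. This yields $\nabla Z(\bm\theta) = \int_{[0,1]^d} \phi_m^k(\bm u)\, e^{\bm\theta^T\phi_m^k(\bm u)}\, d\bm u$ and $\nabla^2 Z(\bm\theta) = \int_{[0,1]^d} \phi_m^k(\bm u)\,\phi_m^k(\bm u)^T e^{\bm\theta^T\phi_m^k(\bm u)}\, d\bm u$.

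Then, for any $\bm v\in\R^q$, I would write $\bm v^T \nabla^2 F(\bm\theta)\, \bm v = \int_{[0,1]^d} \bigl(\bm v^T\phi_m^k(\bm u)\bigr)^2 e^{\bm\theta^T\phi_m^k(\bm u)}\, d\bm u \ge 0$, with equality forcing the non-negative integrand to vanish almost everywhere. Since $e^{\bm\theta^T\phi_m^k(\bm u)}>0$ for all $\bm u$, this means $\bm v^T\phi_m^k(\bm u)=0$ for Lebesgue-almost every $\bm u$, hence by continuity of $\phi_m^k$ for every $\bm u\in[0,1]^d$. This contradicts the standing assumption that the coordinate functions of $\phi_m^k$ are not linearly dependent (for $\phi_m^k$ satisfying (P5) this is already visible in the monomial block $1,u_i,\dots,u_i^k$) unless $\bm v=\vzero$. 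Therefore $\nabla^2 F(\bm\theta)\succ 0$ for every $\bm\theta\in\R^q$, and since $\R^q$ is convex, $F$ — i.e. the program (\ref{CVX-MIND}) — is strictly convex.

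The argument is essentially a computation and I expect no serious obstacle. The two points deserving care are (i) the interchange of differentiation and integration, which is routine precisely because everything lives on the compact set $[0,1]^d$ with $\phi_m^k$ continuous there; and (ii) invoking linear independence of the coordinates of $\phi_m^k$ at exactly the step where the quadratic form could otherwise degenerate — this is the only place strictness (rather than plain convexity) enters, so it should be flagged explicitly.
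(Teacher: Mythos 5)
Your proof is correct and follows the same route as the paper: compute the Hessian $\int_{[0,1]^d} \phi_m^k(\vu)\,\phi_m^k(\vu)^T e^{\bm{\theta}^T\phi_m^k(\vu)}\, d\vu$ and conclude strict positive-definiteness from the linear independence of the coordinates of $\phi_m^k$. The paper states this in one line; you supply the supporting details (differentiation under the integral, the quadratic-form argument, and the passage from ``zero a.e.''\ to ``zero everywhere''\ via continuity), which the paper leaves implicit.
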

\begin{lemma}
\label{lem:same}
The minimizer of problem (\ref{CVX-MIND}) is the maximizer of problem (\ref{A-MIND}).
\end{lemma}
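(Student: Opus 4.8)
The plan is to match the first-order optimality condition of (\ref{CVX-MIND}) with the characterization of the maximizer $P_{\text{AM}}$ supplied by Theorem~\ref{theo:a-MIND}. Write $g(\bm{\theta}) := -\bm{\theta}^T \bm{\alpha}_m^k + \int_{[0,1]^d} e^{\bm{\theta}^T \phi_m^k(\vu)}\, d\vu$ for the objective of (\ref{CVX-MIND}). Since $[0,1]^d$ is compact and $\phi_m^k$ is continuous, $\phi_m^k$ is bounded on $[0,1]^d$, so for $\bm{\theta}$ in any bounded set the integrand and its $\bm{\theta}$-derivatives are uniformly bounded; hence $g$ is finite and smooth on all of $\mathbb{R}^{q}$ and we may differentiate under the integral sign to obtain $\nabla g(\bm{\theta}) = -\bm{\alpha}_m^k + \int_{[0,1]^d} \phi_m^k(\vu)\, e^{\bm{\theta}^T\phi_m^k(\vu)}\, d\vu$.

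First I would analyze the stationarity equation $\nabla g(\bm{\theta}) = 0$, i.e. $\int_{[0,1]^d}\phi_m^k(\vu)\,e^{\bm{\theta}^T\phi_m^k(\vu)}\,d\vu = \bm{\alpha}_m^k$. By (P5), hence (P1), the first coordinate of $\phi_m^k$ is the constant $1$ and the first coordinate of $\bm{\alpha}_m^k$ is $1$, so the first component of this vector identity reads $\int_{[0,1]^d} e^{\bm{\theta}^T\phi_m^k(\vu)}\,d\vu = 1$; thus $p(\vu) := e^{\bm{\theta}^T\phi_m^k(\vu)}$ is a genuine probability density supported on $[0,1]^d$, and the remaining components say precisely $E_{p}\left[\phi_m^k(\vu)\right] = \bm{\alpha}_m^k$. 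In other words, $\bm{\theta}$ is stationary for $g$ if and only if $e^{\bm{\theta}^T\phi_m^k(\cdot)}$ is a distribution in $\mathcal{D}_d$ satisfying the constraint of (\ref{A-MIND}) — which is exactly the description of $p_{\text{AM}}$ in Theorem~\ref{theo:a-MIND}. Its uniqueness clause then forces $g$ to have at most one stationary point, equal, when it exists, to the $\bm{\theta}$ with $e^{\bm{\theta}^T\phi_m^k(\cdot)} = p_{\text{AM}}$.

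It remains to settle existence and to conclude via convexity. Rather than arguing coercivity of $g$ directly, I would import existence of a stationary point from Theorem~\ref{theo:a-MIND}: under the standing feasibility hypothesis it produces a $\bm{\theta}$ with $p_{\text{AM}} = e^{\bm{\theta}^T\phi_m^k}$ and $E_{P_{\text{AM}}}[\phi_m^k] = \bm{\alpha}_m^k$, which is precisely $\nabla g(\bm{\theta}) = 0$. Since $g$ is strictly convex by Lemma~\ref{lem:cvx}, this stationary point is the unique global minimizer of (\ref{CVX-MIND}). Combining both directions, the minimizer of (\ref{CVX-MIND}) is the unique $\bm{\theta}$ for which $e^{\bm{\theta}^T\phi_m^k(\cdot)}$ is the density of the maximizer $P_{\text{AM}}$ of (\ref{A-MIND}), which is the asserted identification. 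The only delicate points are the justification of differentiating under the integral — trivialized here by compactness of $[0,1]^d$ and continuity of $\phi_m^k$ — and the realization that one should not attempt to prove that $g$ attains its infimum by a coercivity estimate, but instead borrow the existence of the extremizer from the already-proven maximum-entropy theorem, with strict convexity upgrading ``a stationary point'' to ``the unique minimizer.''
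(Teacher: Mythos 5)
Your proof is correct and follows essentially the same route as the paper's: set the gradient of the (\ref{CVX-MIND}) objective to zero, use the constant first coordinate of $\phi_m^k$ to deduce that $e^{\bm{\theta}^T\phi_m^k(\cdot)}$ normalizes to a density satisfying the constraints, and invoke Theorem~\ref{theo:a-MIND} to identify it with $p_{\text{AM}}$. You are in fact somewhat more careful than the paper, which tacitly assumes a critical point exists, whereas you explicitly import existence from Theorem~\ref{theo:a-MIND} and justify differentiation under the integral sign.
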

\begin{theorem}
\label{theo:fund}
Let $\phi_m^k$ satisfy (P3) and (P5). If $\bm{\hat{\beta}}_{m,n}$ is a consistent estimator of $\bm{\beta}_m := E_{P_{\displaystyle \vu_z}} \left[ \psi_m\left( \displaystyle \vu \right) \right]$, then for every $m>0$
\begin{align}
\label{eq:fund_1}
h_{\text{AM}} \left( \displaystyle \vu; \phi_m^k,  \bm{\hat{\beta}}_{m,n} \right) \underset{k,n \rightarrow\infty}{\longrightarrow}  h_{\text{M}}\left( \displaystyle \vu; \psi_m,  \bm{\beta}_m  \right), \tag{A}
\end{align}
and for every $k>0$
\begin{align}
\label{eq:fund_2}
h_{\text{AM}} \left(\displaystyle \vu; \phi_m^k,  \bm{\hat{\beta}}_{m,n} \right) \underset{m,n \rightarrow\infty}{\longrightarrow}  h\left( \displaystyle \vu_z \right). \tag{B}
\end{align}
\end{theorem}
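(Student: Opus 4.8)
The plan is to prove the two limits separately, each by combining a maximum-entropy duality argument (from Theorems~\ref{theo:MIND} and~\ref{theo:a-MIND}) with the continuity of the maximum-entropy value in its moment constraints, and then chaining the approximations.

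\textbf{Step 1: the limit \eqref{eq:fund_1} ($k,n\to\infty$, fixed $m$).} First fix $m$ and the true moments $\bm{\beta}_m$. For finite $k$, the distribution $P_{\text{AM}}$ solving (\ref{A-MIND}) with statistics $\phi_m^k$ matches the first $k$ marginal moments of the uniform plus the constraint $E[\psi_m]=\bm{\beta}_m$, whereas $P_{\text{M}}$ solving (\ref{MIND}) with statistics $\psi_m$ matches exactly-uniform marginals plus the same constraint. The idea is that matching the first $k$ moments of each marginal to those of the uniform forces each marginal to converge (weakly, hence — using the moment bounds — in a strong enough sense) to the uniform as $k\to\infty$: a distribution on $[0,1]$ all of whose moments equal $1/(1+j)$ is the uniform (Hausdorff moment problem is determinate on $[0,1]$), and finite-$k$ moment matching gives a quantitative version. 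Thus the feasible set of (\ref{A-MIND}) ``shrinks toward'' the feasible set of (\ref{MIND}). One then argues $h(P_{\text{AM}}^{(k)})\to h(P_{\text{M}})$: the $\geq$ direction because $P_{\text{M}}$ is (asymptotically) feasible for (\ref{A-MIND}), and the $\leq$ direction because any weak limit point of $P_{\text{AM}}^{(k)}$ is feasible for (\ref{MIND}) and upper semicontinuity of differential entropy (on this compactly-supported family with controlled densities, via the Donsker–Varadhan / Gibbs variational representation used implicitly in Theorem~\ref{theo:MIND}) gives $\limsup h(P_{\text{AM}}^{(k)})\le h(P_{\text{M}})$. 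Finally, the $n\to\infty$ part: replacing $\bm{\beta}_m$ by a consistent estimator $\bm{\hat\beta}_{m,n}$, invoke continuity of the maximum-entropy value $\bm{\beta}\mapsto h_{\text{AM}}(\cdot;\phi_m^k,\bm{\beta})$ in the constraint vector — this is standard for exponential-family/I-projection value functions (the value is a concave function of $\bm{\beta}$, finite and hence continuous on the interior of the feasible moment set) — so $h_{\text{AM}}(\cdot;\phi_m^k,\bm{\hat\beta}_{m,n})\to h_{\text{AM}}(\cdot;\phi_m^k,\bm{\beta}_m)$ a.s. Combining, and being careful that the double limit in $(k,n)$ can be taken (e.g.\ diagonally, using monotonicity/uniformity of one of the two convergences), yields \eqref{eq:fund_1}.

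\textbf{Step 2: the limit \eqref{eq:fund_2} ($m,n\to\infty$, fixed $k$).} Here I would use property (P3): $(\phi_m^k)_m$, equivalently $(\psi_m)_m$, is a universal approximator of continuous functions on $[0,1]^d$. Let $c$ be the true copula density of $\bm{u}_z$ and $p_{\text{AM}}^{(m)}$ the maximizer of (\ref{A-MIND}) under the true constraints $\bm{\alpha}_m^k$ (built from $c$). By Theorem~\ref{theo:a-MIND}, for any distribution $P$ feasible at level $m$, $h(p_{\text{AM}}^{(m)}) - h(P) = \mathrm{KL}(P\,\|\,p_{\text{AM}}^{(m)})\ge 0$; taking $P$ to be the true copula (which is feasible since $\bm{\alpha}_m^k$ is defined from it) gives $h(p_{\text{AM}}^{(m)}) = h(\bm{u}_z) + \mathrm{KL}(c\,\|\,p_{\text{AM}}^{(m)})\ge h(\bm{u}_z)$. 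So it remains to show $\mathrm{KL}(c\,\|\,p_{\text{AM}}^{(m)})\to 0$. The standard maximum-entropy-moment-matching argument: because the coordinates of $\psi_m$ together with lower-order ones are dense, one can approximate $\log c$ uniformly (or in the relevant $L^1$/weighted sense) by linear combinations $\bm\theta^T\phi_m^k$; then $e^{\bm\theta^T\phi_m^k}$ approximates $c$, is itself feasible-after-normalization with moments close to $\bm{\alpha}_m^k$, and has entropy close to $h(c)$. Squeezing $h(\bm{u}_z)\le h(p_{\text{AM}}^{(m)})\le h(\text{this near-optimal feasible density}) \to h(\bm{u}_z)$ forces convergence, and $\mathrm{KL}(c\,\|\,p_{\text{AM}}^{(m)})\to0$. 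Then the $n\to\infty$ direction is again continuity of the max-entropy value in the constraint vector $\bm{\hat\beta}_{m,n}\to\bm{\beta}_m$, taken jointly with $m$ (diagonal subsequence).

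\textbf{Main obstacle.} The delicate point is the interchange / jointness of limits and the semicontinuity of differential entropy without compactness of the density class. Differential entropy is not weakly continuous in general, and the feasible sets here are only moment-constrained, so I would need: (i) a tightness/uniform-integrability argument (densities are on $[0,1]^d$, so tight automatically, but I also need control of $\int p\log p$ from below, i.e.\ no mass concentrating), obtained from the explicit exponential form $p=e^{\bm\theta^T\phi}$ plus feasibility bounds on $\|\bm\theta\|$; and (ii) a uniform (in $m$ or in $k$) modulus of continuity for $\bm\beta\mapsto h_{\text{AM}}$ so that the $n$-limit commutes with the $k$- (resp.\ $m$-) limit. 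Establishing (ii) — effectively a stability/robustness estimate for I-projections that is uniform across the growing family of statistics — is where the real work lies; everything else is the classical Csisz\'ar I-divergence geometry already cited.
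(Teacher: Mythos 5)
Your part \eqref{eq:fund_1} follows the paper's route: the paper also reduces the uniform-marginal constraint to the full Hausdorff moment sequence $E(u_i^j)=1/(1+j)$ and treats (\ref{A-MIND}) as the $k$-truncation of (\ref{MIND}), then handles $n\to\infty$ by the consistency theorem for extremum estimators of \cite{newey1994large} applied to (\ref{CVX-MIND}) (equivalent to your continuity-of-the-value-function argument). The paper is in fact terser than you are: it asserts the $k\to\infty$ convergence of the optimal values directly from the nesting of the feasible sets, without the semicontinuity/tightness discussion you flag in your ``main obstacle'' paragraph, and it does not address uniformity of the two limits. Your identification of that gap is legitimate; the paper does not close it either.

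For part \eqref{eq:fund_2} you take a genuinely different route. The paper writes $h(\vu_z)=-KL(P_{\vu_z}\,\Vert\,U)$ and invokes the variational (NWJ) characterization of the KL divergence from \cite{nguyen2010estimating}, $h(\vu_z)=\inf_{T}\,-E_{P_{\vu_z}}[T(\vu)]+\int e^{T(\vu)-1}d\vu$ over continuous $T$; restricting $T-1=\bm\theta^T\phi_m$ makes the inner problem exactly (\ref{CVX-MIND}) with value $h_{\text{AM}}$, and (P3) gives convergence of the restricted infima to the full infimum. Your route instead goes through the Pythagorean identity of Theorem \ref{theo:a-MIND} and an explicit near-optimal exponential approximant of the true copula density. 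That is workable (it is the Barron--Sheu style argument), but one step as written is backwards: you claim $h(p_{\text{AM}}^{(m)})\le h(\text{near-optimal feasible density})$, whereas $p_{\text{AM}}^{(m)}$ is the entropy \emph{maximizer} over the feasible set, so any feasible density has entropy \emph{at most} $h(p_{\text{AM}}^{(m)})$ and your squeeze collapses. The fix is to compare in KL rather than in entropy: if $g_m\propto e^{\bm\theta^T\phi_m^k}$ is the normalized approximant of $\log c$, the transitivity/Pythagorean identity for $I$-projections (\cite{csiszar1975divergence}) gives $KL\left(c\,\Vert\, g_m\right)=KL\left(c\,\Vert\, p_{\text{AM}}^{(m)}\right)+KL\left(p_{\text{AM}}^{(m)}\,\Vert\, g_m\right)\ge KL\left(c\,\Vert\, p_{\text{AM}}^{(m)}\right)$, and the left side tends to $0$ by (P3), which yields $h(p_{\text{AM}}^{(m)})\to h(\vu_z)$. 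The paper's NWJ formulation buys a cleaner passage to the limit because the infimum over continuous $T$ equals the KL divergence without needing $\log c$ itself to be continuous or the approximant to be exactly feasible; your construction buys a more explicit, self-contained argument but requires the normalization/feasibility patch and the corrected inequality above.
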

See Appendix \ref{proof:lem:cvx} for the proof of Lemma \ref{lem:cvx}, Appendix \ref{proof:lem:same} for the proof of Lemma \ref{lem:same}, and Appendix \ref{proof:theo:fund} for the proof of Theorem \ref{theo:fund}.
\subsection{Choice of $k$ and $\psi_m$}
It follows from Theorem \ref{theo:fund} that $k$ controls how close to uniform marginals of $p_{\text{AM}}$ are, while $m$ controls how close $h_{\text{AM}}\left( \displaystyle \vu_z \right)$ is to $h\left( \displaystyle \vu_z \right)$. When the object of study is to learn the copula itself not just its entropy, $k$ should be as large as necessary. Note however that the maximum-entropy problem (\ref{A-MIND}) inherently favors distributions that are as close to uniform as allowed by empirical evidence, so that a large $k$ might not be needed in practice. 

An example family $\left(\psi_m\right)_m$ that satisfies (P3) are polynomials of degree $m$, thanks to the Stone-Weierstrass theorem (\cite{rudin1973functional}). With this choice of $\psi_m$, our approach can be regarded as a maximum-entropy Taylor expansion of the true log copula density $\log p\left( \displaystyle \vu_z \right)$. Recalling that $\rho_{ij} = 12E\left[u_iu_j\right]-3$ is the population version of the Spearman rank correlation between associated primal variables (\cite{nelsen2007introduction}), it follows that with $m$ as small as $2$, we can capture all smooth 1-to-1 associations\footnote{A continuous function of one variable is 1-to-1 if and only if it is either decreasing or increasing, both of which are captured by Spearman's rank correlation.} between any two coordinates of $\displaystyle \vz$. 

Monomials with degree $m>2$ grow combinatorially in number, but do not provide as much insights per term as $\psi_2$. Thus, before considering higher degree polynomials, we suggest leveraging Proposition \ref{prop:redund} in combination with $\psi_2$ to incorporate specific types of smooth but non-1-to-1 associations. For instance, using $f_{\bm{\mu}}(\displaystyle \vx) := \vert \displaystyle \vx -\bm{\mu} \vert$, where the absolute value is coordinatewise, and solving the problem (\ref{MIND}) with $m=2$ to estimate $I\left(\displaystyle \vy; \displaystyle \vx\right)$ through $I\left(\displaystyle \vy; \displaystyle \vx, f_{\bm{\mu}}(\displaystyle \vx) \right)$, allows us to reveal any possible associations of the type `a coordinate of $\bm{y}$ tend(s) to be monotonically related to the departure of coordinates of $\bm{x}$ from some baseline values', as well as all smooth 1-to-1 associations between coordinates of $\bm{x}$ and $\bm{y}$. Good examples for $\bm{\mu}$ are the sample median or mean of $\displaystyle \vx$. To capture departures from a standard range of values rather than a single one, $f_{\bm{\mu}}$ can be passed through an $\epsilon$-insensitive loss function. When output $y_j$ is a quasi-periodic function of $x_i$ (e.g. $x_i$ is time and output $y_j$ is seasonal), $f_{\pi}(x_i) = x_i - \lfloor x_i / \pi_i \rfloor \pi_i$ allows us to capture seasonality-adjusted effects.
\subsection{Handling Categorical Data}
Categorical and non-ordinal variables should be ordinarily encoded as customary, and ordinal data should be treated as continuous variables. The only practical requirements for the validity of this approach are i) to use a ranking function that assigns different ranks to all inputs including ties (e.g. scipy's `rankdata' function with method `ordinal'), and ii) to avoid encoding methods that may result in linearly dependent coordinates (e.g. one-hot-encoding on a binary non-ordinal categorical variable). When a suitable ranking function is not available a small random jitter may be added to ordinal variables to remove ties. 

This approach is mathematically valid thanks to the quantization characterization of the mutual information (\cite{incover1999elements}, Definition 8.54). See Appendix \ref{sct:handcat} for more details.
\subsection{Properties}
We summarize some key properties of our mutual information estimator $I_\text{AM}$.

\textbf{Non-negativity:} $I_\text{AM}$, is always non-negative thanks to requirement (P4) and Theorem \ref{theo:sparse_2}.

\textbf{Unboundedness for every} $n$: Because $I_\text{AM}$ only depends on estimated expected statistics, not i.i.d. samples themselves, and because said estimated expected statistics may take extreme values for any $n$, $I_\text{AM}$ cannot be upper-bounded by a function of $n$.

\textbf{Marginal-Invariance:} Our entire approach does not depend on marginal distributions. Additionally, our copula entropy estimator is invariant by any increasing univariate feature transformation (it depends on the data solely through ranks), and any smooth 1-to-1  univariate feature transformation for $k$ large enough enough (Proposition \ref{prop:inv} and Theorem \ref{theo:fund}-\ref{eq:fund_1}).

\textbf{Low Variance:} Our approach depends on the data generating distribution solely through the expected statistics constraints, which are estimated using Equation (\ref{eq:estm}) with $O(1/n)$ MSE rate (\cite{ruschendorf1976asymptotic}). By the delta method, both the associated natural parameters $\bm{\theta}$ and the corresponding copula entropies $-\bm{\theta}^T \bm{\hat{\alpha}}_{m, n}$, and therefore $I_\text{AM}$, have MSE rate $O(1/n)$.

\textbf{Consistency:} The fact that $I_\text{AM}$ is a consistent estimator of the true mutual information is a direct consequence of the consistency of the rank estimator Equation (\ref{eq:estm}) and of the individual copula entropy estimators as a result (see Theorem \ref{theo:fund}-\ref{eq:fund_2}).

\textbf{Low Complexity:} With our choice of $\phi_m$, pre-optimization complexity is dominated by the computation of ranks, which scales in $\mathcal{O}(d^2 n\log n)$, while optimization can scale in  $\mathcal{O}(d^2)$ using gradient descent, where $d$ is the number of inputs and output(s). Calculating the integral over $[0, 1]^d$ is only required while solving (\ref{CVX-MIND}) to compute the gradient and possibly the Hessian; it is not needed to calculate the optimal entropy itself. Thus, a crude approximation using naive Monte Carlo at every learning step is good enough; the resulting algorithm, which can be regarded as mini-batch stochastic gradient descent on a convex objective (\cite{bottou2010large}), will converge to the right solution even for large $d$.
\section{Applications}
\label{sct:application}
We begin by applying our approach to clarifying a common misconception. 
\subsection{The Multivariate Gaussian is Highly Structured}
The use of multivariate Gaussian variables is often justified by the fact that they are maximum-entropy (or the least informative of all distributions supported on $\mathbb{R}^d$) under Pearson correlation constraints. Such a choice is equivalent to assuming that marginals are Gaussian and the copula is the Gaussian copula. It might surprise the reader to know that the Gaussian copula is in fact highly structured/informative.
\begin{figure}[h]
\includegraphics[width=0.5\textwidth]{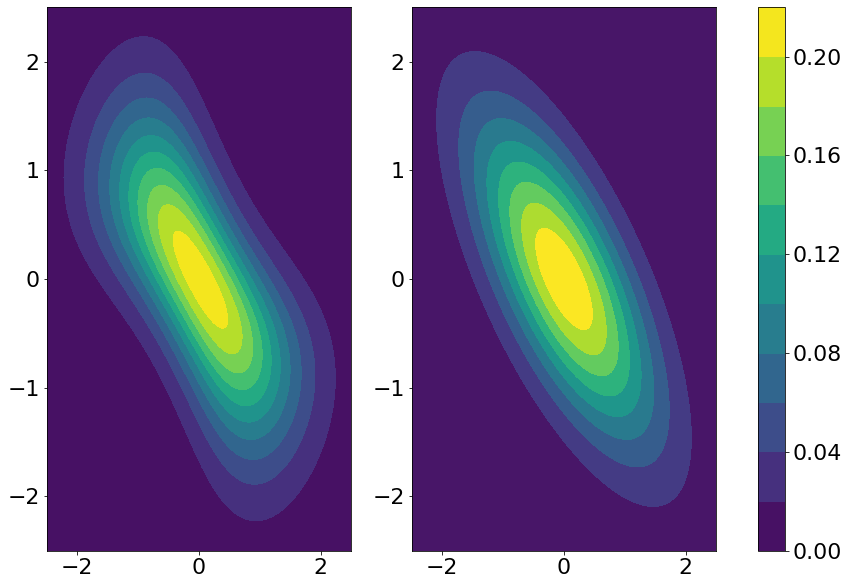}
\caption{Two bivariate pdfs with standard normal marginals, and the same Spearman rank correlation $-0.7$. The copula is Gaussian on the right, and maximum-entropy under the Spearman rank correlation constraint on the left.}
\label{fig:visual_pdf}
\end{figure}
For instance, it can be seen in Figure \ref{fig:visual_pdf} that the Gaussian copula posits that tails are much more tightly coupled than the corresponding\footnote{With the same Spearman correlation structure.}  maximum-entropy copula.

Additionally, as illustrated in Figure \ref{fig:distance} in the Appendix, the bivariate Gaussian pdf is on average about $10\%$ off, and up to $50\%$ off from the pdf with the same marginals and copula the least informative copula with the same Spearman correlation.\footnote{The Spearman rank correlation $\rho_s$ of a bivariate Gaussian with Pearson correlation $\rho$ reads $\rho_s = \frac{6}{\pi} \arcsin \left(\frac{\rho}{2} \right)$ (\cite{kruskal1958ordinal}).} When applying the maximum-entropy principle in the primal space, constraints should always be broken down into constraints that solely apply to the copula and constraints that solely apply to marginals (if any). If this is not done, marginal entropies will tend to dominate the copula entropy in the entropy decomposition of Proposition \ref{prop:entropy_decomp}, and the copula will tend to be low entropy. When constraints are so separable, Equation (\ref{eq:ent_decomp}) allows us to break down the optimization problem into two, one maximum-entropy problem about the copula, which our approach allows solving, and one about marginals. This is not possible using covariance matrices as maximum-entropy constraints given that Pearson's correlation is not a functional of the copula: it \emph{does} depend on marginals. As expected, marginal entropies in this case do dominate the copula entropy in the maximum-entropy problem, which explains why univariate Gaussians are indeed high entropy but the Gaussian copula is low entropy/highly structured.
\begin{figure}[h]
\includegraphics[width=0.5\textwidth]{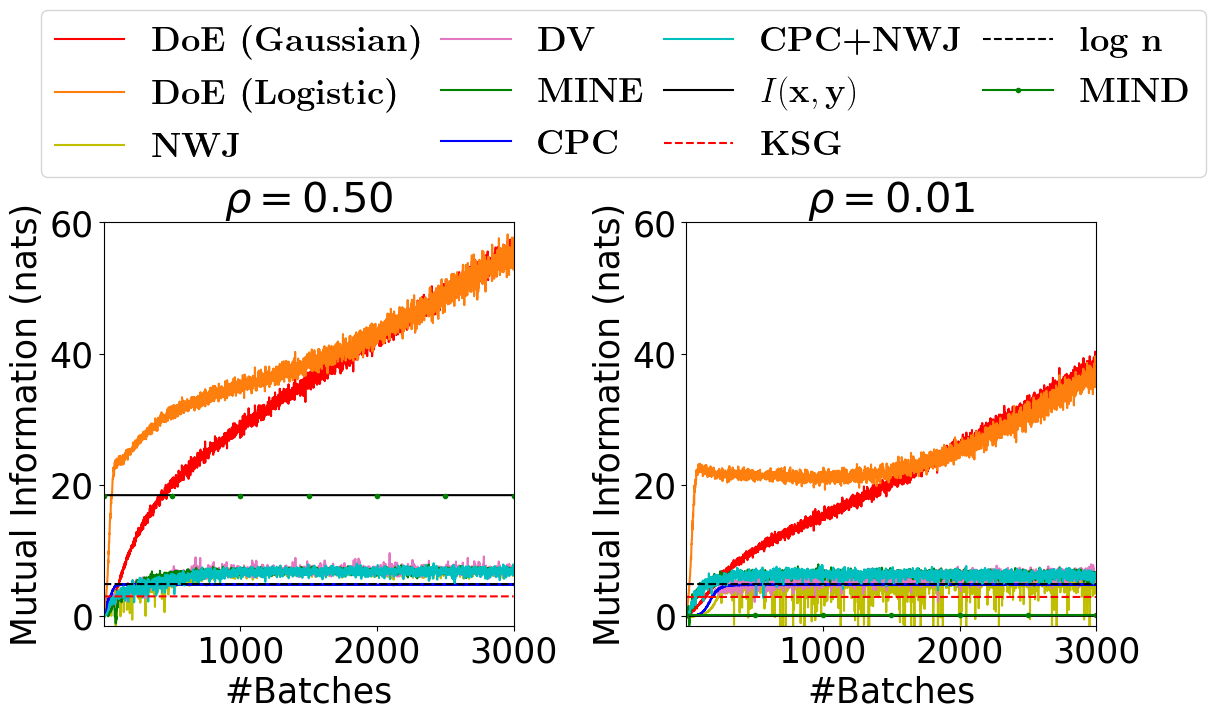}
\caption{Estimation of the mutual information between two $128$-dimensional vectors $\bm{x}=(x_1, \dots, x_d)$ and $\bm{y}=(y_1, \dots, y_d)$ from a draw of $1000$ i.i.d. samples. $(x_i, y_i)$ are i.i.d. Gaussians with mean zero, unit marginal variance, and correlation $\rho$. For models using deep neural networks, we run the experiment $10$ times and report for each batch number the estimate that is the closest to the ground truth. The ground truth is the black solid line labeled $I\left(\bm{x}; \bm{y}\right)$.}
\label{fig:comp}
\end{figure}
\subsection{MIND is Far More Data-Efficient Than Competing Approaches}
Next, we illustrate that our approach is far more data-efficient than all alternatives, in both low and high mutual information settings. 

We repeat the experiment of \cite{mcallester2020formal}, and estimate the mutual information  between two $d$-dimensional vectors $\bm{x}=(x_1, \dots, x_d)$ and $\bm{y}=(y_1, \dots, y_d)$, where  $(x_i, y_i)$ are i.i.d. Gaussians with mean zero, unit marginal variance, and correlation $\rho$. The true mutual information in this case is $I(\bm{y}; \bm{x}) = -\frac{d}{2}\log \left(1-\rho^2 \right)$. We reuse the exact same settings as \cite{mcallester2020formal},  except for one simple change. Rather than drawing a fresh mini-batch from the true data generating distribution, which is equivalent to using $384000$ i.i.d. samples in total, we generate $1000$ i.i.d. samples used by all experiments and from which mini-batches are sampled. We run the experiment in a high ($\rho=0.5$) and a low ($\rho=0.01$) mutual information setting. We use the code provided by the authors of \cite{mcallester2020formal} at \url{https://github.com/karlstratos/doe} for all models but KSG (\cite{kraskov2004estimating}) and MIND. For MIND, we use second order polynomials as $\phi_m$. As it can be seen in Figure \ref{fig:comp} and in Table \ref{tab:comp}, MIND is the only model able to come anywhere close to the ground truth in  high or low mutual information settings. DoE models clearly overshoot in both settings. If we refer to Figure 2 of \cite{mcallester2020formal}, we may  conclude that DoE models need about $2000 \times 128$ i.i.d. samples to converge to the ground truth in this experiment, which is $256$ times more than what MIND requires. Variational models in the primal space overshoot in low mutual information settings and seem to be upper-bounded by $O(\log n)$ in high mutual information settings. As for the nonparametric KSG estimator, it struggles with large input dimensions.
\begin{table}
\centering
\begin{tabular}{ c ||c | c | c | c | c}
                               $I(\bm{y}; \bm{x})$ &  MIND & DoE & MINE & NWJ& KSG \\ 
\hline 
                               18.41 &  \bf{18.36} & 53.88 & 6.92 &  7.22 & 2.98\\ 
\hline 
                              0.01 &  \bf{0.08} & 37.49 & 5.83 & 1.67 & 2.89 
\end{tabular}
\caption{Estimates of the mutual information between two $128$-dimensional vectors from $1000$ i.i.d. samples in low and high mutual information settings using various models.  The ground truth is the column $I(\bm{y}; \bm{x})$ and the closest model to it is in bold.}
\label{tab:comp}
\end{table}
\begin{figure*}
\centering
\includegraphics[width=0.3\textwidth]{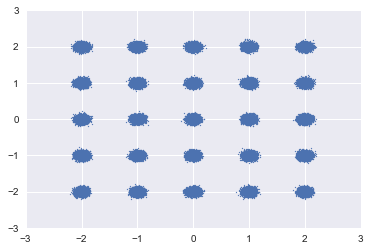}
\includegraphics[width=0.3\textwidth]{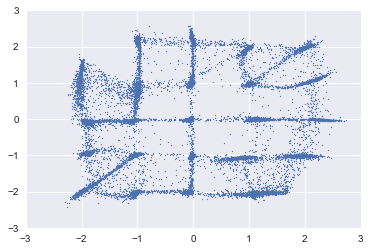}
\includegraphics[width=0.3\textwidth]{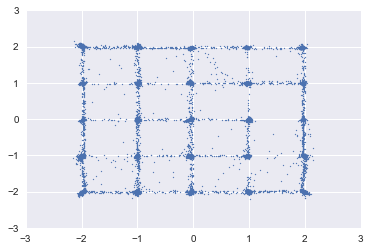}
\caption{Illustration of GANs trained on the 25 Gaussians dataset (left) without regularization (middle), and with copula entropy regularization (right). Both models use the same discriminator and generator architectures and were stopped after 500 epochs.}
\label{fig:25_gaussians}
\end{figure*}
\subsection{Copula Entropy Regularized Generative Adversarial Networks (CER-GANs)}
Finally, we illustrate that our approach may be used to prevent mode collapse in GANs.

We recall that GANs (\cite{goodfellow2014generative}) are very effective generative models made of two modules: a generator whose state is represented by a function $G: \mathcal{Z} \subset \mathbb{R}^q \to \mathcal{X}$, and a discriminator whose state is represented by a function $D: \mathcal{X} \to [0, 1]$. The aim is to learn a function $G$ that maps a simple noise or code distribution $P_{\bm{z}}$ supported on $\mathcal{Z}$ to a true data generating distribution of interest $P_{\bm{x}}$ supported on $\mathcal{X}$, so that we may draw samples from the true distribution $P_{\bm{x}}$ (e.g. realistic images) simply as $G(\bm{z})$ with $\bm{z} \sim P_{\bm{z}}$. To do so, a GAN alternates between two steps. The first step consists of learning a discriminating function $D$ that, as the predictive probability of a binary classifier, is effective at telling draws from the true distribution $P_{\bm{x}}$ apart from draws of the form $G(\bm{z})$ (i.e. that are fake). This is done by maximizing the likelihood-like objective
\begin{align*}
\mathcal{L}_D := E_{P_{\bm{x}}}\left[\log D\left(\bm{x}\right) \right] +  E_{P_{\bm{z}}}\left[ \log \left(1- D\left(G(\bm{z}) \right) \right) \right]
\end{align*}
over functions $D$ induced by a deep neural network, for a given $G$. The second step consists of updating the state $G$ of the generator so as to trick the discriminator into thinking that fake samples are real. This is done by minimizing the objective
\begin{align*}
\mathcal{L}_G := E_{P_{\bm{z}}}\left[ \log \left( 1- D\left(G(\bm{z}) \right)  \right) \right]
\end{align*}
over functions $G$ induced by another deep neural network. When $P_{\bm{x}}$ is multi-modal, as is often the case in real-life applications, if the generator becomes good at sampling from a mode of $P_{\bm{x}}$, then it will keep generating samples near the same mode, as no term in the objectives $\mathcal{L}_D $ and $\mathcal{L}_G$ incentivizes the generator to keep exploring beyond a mode. This pathology of GANs is known as mode collapse (\cite{che2019mode}). \cite{belghazi2018mutual} identified as possible solution regularizing $\mathcal{L}_G$ with the entropy of $G(\bm{z})$ so as to foster exploration, but the authors considered this solution intractable. Instead they followed \cite{chen2016infogan} and focused on cases where, in addition to $\bm{z}$, the generator uses meta-data $\bm{c}$ that implicitly identify modes of the true distribution $P_{\bm{x}}$ (e.g. the digit in the case of MNIST). The authors realized that a low mutual information $I\left(G(\bm{z}, \bm{c}); \bm{c} \right)$ between fake samples and the associated codes reveals mode collapse, and consequently proposed regularizing $\mathcal{L}_G$ with the negative of the foregoing mutual information term:
\begin{align*}
\tilde{\mathcal{L}}_G = \mathcal{L}_G -\beta I\left(G(\bm{z}, \bm{c}); \bm{c} \right), ~ \beta > 0.
\end{align*}
We propose an alternative that does not rely on meta-data. 

It follows from Sklar's theorem (Theorem \ref{theo:sklar}) that to prevent the generator from collapsing to a mode, it suffices to prevent the copula of $G(\bm{z})$ from collapsing to a mode, which can be done by regularizing the generator's loss function with the entropy of the copula of $G(\bm{z})$:
\begin{align*}
\bar{\mathcal{L}}_G = \mathcal{L}_G -\beta h\left(\bm{u}_{G(\bm{z})} \right), ~ \beta > 0.
\end{align*}
Using the problem (\ref{CVX-MIND}), we may write the regularized batch training step of the generator as
\begin{align*}
 \min_{G} ~~ \frac{1}{b} \sum_{i=1}^b  &\log \left[1-D\left(G(\bm{z}_i)\right) \right] + \beta \left[ \bm{\theta}^T \phi \left( \hat{\bm{u}}_i\right) -  e^{\bm{\theta}^T\phi \left( \displaystyle \vu_i \right)} \right]
\end{align*}
where $\beta > 0$, $\phi$ is the constraint function, $\bm{u}_i \sim \mathcal{U}\left([0, 1]^d\right)$ are standard uniforms, and $\hat{\bm{u}}_i = \frac{\text{rg}(G(\bm{z}_i))}{b+1}$, the rank being understood as within-batch. The generator step is now followed by the batch copula step
\begin{align*}
 \min_{\bm{\theta}} ~~ \frac{1}{b} \sum_{i=1}^b  -\bm{\theta}^T \phi \left( \hat{\bm{u}}_i\right) +  e^{\bm{\theta}^T\phi \left( \displaystyle \vu_i \right)}.
\end{align*}
We call this model Copula Entropy Regularized GANs (CER-GANs). Figure \ref{fig:25_gaussians} illustrates the efficacy of CER-GANs on the 25-Gaussians dataset ( \cite{belghazi2018mutual}).
\section{Conclusion}
We propose a novel approach for solving maximum-entropy copula problems under flexible linear constraints as a convex optimization problem, and we apply our finding to estimating the mutual information between two random vectors. Our approach is inductive in that it relies on the data generating distribution solely through some association-revealing nonparametric properties; it does not assume we have enough data to fully characterize the underlying true joint distribution. This allows the resulting estimator, which we denote MIND, to be considerably more data efficient than all competing models. We show that MIND can accurately estimate the mutual information even when the sample size $n$ is not large enough to fully characterize the true data generating distribution. For large $n$ settings, we show that MIND is a consistent estimator of the true mutual information and has MSE rate $O(1/n)$. Beyond mutual information estimation, we show that mode collapse in GANs can be mitigated by adding a regularizing term that maximizes  the copula entropy of the generator using MIND.

An implementation of MIND can be accessed from the Python package `kxy' available from Pypi (by running `pip install kxy') or GitHub (\url{https://github.com/kxytechnologies/kxy-python}).
\bibliography{mind}
\bibliographystyle{mind}
\newpage
\appendix

\begin{table*}[t]
\centering
\resizebox{\textwidth}{!}{
\begin{tabular}{ c |c |c }
\hline 
                               & $y$ \bf{ is continuous} & $y$ \bf{is categorical} \\ 
\hline 
$\displaystyle \vx$ \bf{is continuous} & $I(\bm{y}; \displaystyle \vx) = h(\bm{y}) + h(\displaystyle \vx) - h(\bm{y}; \displaystyle \vx)$ & $I(y; \displaystyle \vx) = h(\displaystyle \vx) - \sum_{i\in \mathcal{Y}} h(\displaystyle \vx | y=i)P_y(i)$ \\  
\hline 
$\displaystyle \vx$ \bf{is categorical} & $I(\bm{y}; \displaystyle \vx) = h(\bm{y}) - \sum_{i\in \mathcal{X}} h(\bm{y} | \displaystyle \vx=i)P_{\displaystyle \vx}(i)$ & $I(y; \displaystyle \vx) = H(y) + H(\displaystyle \vx) - H(y; \displaystyle \vx)$ \\
\hline 
\vtop{\hbox{\strut $\displaystyle \vx$ \bf{has continous  coordinates} $\displaystyle \vx_c$}\hbox{\strut \bf{and categorical coordinates} $\displaystyle \vx_d$}}& $I(\bm{y}; \displaystyle \vx) = h(\bm{y}) + \sum_{i \in \mathcal{X}_d} \left[h(\displaystyle \vx_c | \displaystyle \vx_d = i) -  h\left(\bm{y}, \displaystyle \vx_c | \displaystyle \vx_d = i\right)\right]P_{\displaystyle \vx_d}(i)$ & $I(y; \displaystyle \vx) = I(y; \displaystyle \vx_d) + \sum_{i \in \mathcal{X}_d} P_{\displaystyle \vx_d}(i) h(\displaystyle \vx_c | \displaystyle \vx_d = i) -  \sum_{j \in \mathcal{Y}} h\left(\displaystyle \vx_c | \displaystyle \vx_d = i, y=j\right)P_{\displaystyle \vx_d, y}(i, j)$ \\
\hline 
\end{tabular}}
\caption{Expression of the mutual information $I(y; \displaystyle \vx)$ as a function of the Shannon entropy $H(.)$, and/or the differential entropy $h(.)$, depending on whether $y$ and/or $\displaystyle \vx$ has continuous and/or categorical coordinates. Expressions of the type $h(\displaystyle \vx|y=i)$ are to be understood as the differential entropy of the continuous conditional distribution $\displaystyle \vx|y=i$.}
\label{tab:mutual_information}
\end{table*}
\begin{figure}[h]
\includegraphics[width=0.5\textwidth]{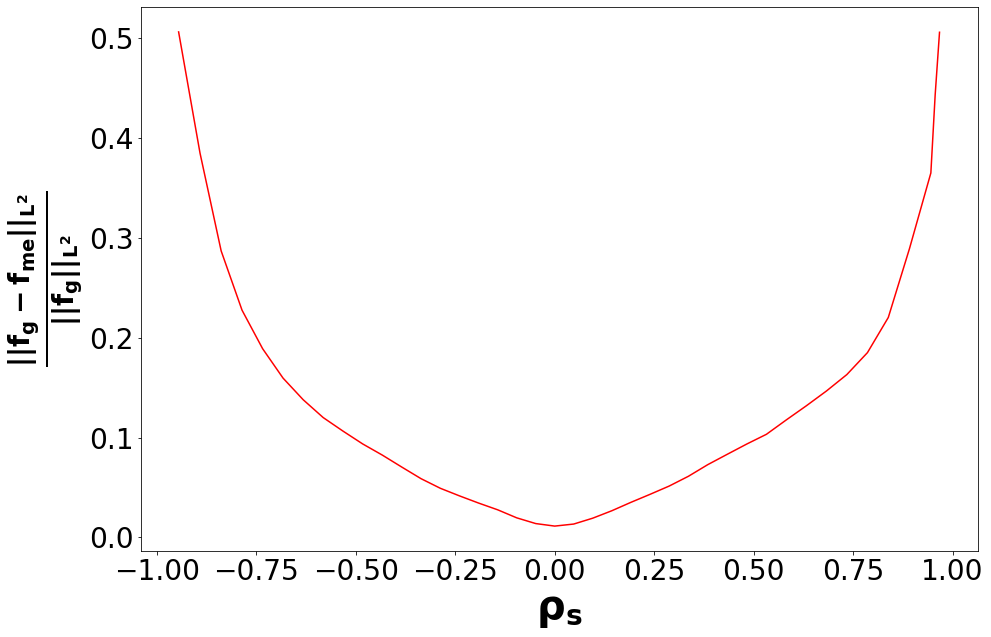}
\caption{Normalized $L_2$ distance between two bivariate distributions with standard normal marginals, one with Gaussian copula and the other with maximum-entropy copula under Spearman correlation constraints, for various Spearman correlation values $\rho_s$.}
\label{fig:distance}
\end{figure}
\section{Further Details}
\subsection{Additional Experiment}
\label{sct:addexp}
It can be shown that the highest $R^2$ and the lowest RMSE achievable by a regression model using $\bm{x}$ to predict $y$ read respectively
$$\bar{R}^2\left( P_{\bm{x}, y} \right) = 1- e^{-2I(y; \bm{x})}$$ 
and 
$$\bar{RMSE}\left( P_{\displaystyle \vx, y} \right) = e^{-I\left(y ; \displaystyle \vx \right)} \sqrt{\mathbb{V}\text{ar}\left( y \right)}.$$
We use MIND to estimate the highest performance achievable in the Kaggle competition `House Prices: Advanced Regression Techniques'. The aim is to predict the price at which houses were sold from various continuous and categorical variables. The results are summarized in Table \ref{tab:kaggle}, when all variables are used, when the $10$ explanatory variables OverallQual, GrLivArea, YearBuilt, TotalBsmtSF, OverallCond, LotArea, BsmtFinSF1, BldgType, KitchenQual, MSZoning are used, and when the first $5$ of the foregoing list are used.
\begin{table}[h]
\centering
\begin{tabular}{ c |c | c | c | c }
                             $I(y; \bm{x})$  & $\bar{R}^2$ &  $\bar{RMSE}$ &  $d$ & $n$ \\ 
\hline 
                             1.50 & 0.95 &  18,531 & 80 &  1460 \\ 
\hline 
                           0.76  & 0.78 &  36,979 & 10 &  1460 \\ 
\hline 
                          0.65  & 0.73 &  41,007 & 5 &  1460 
\end{tabular}
\caption{Mutual information and highest performances achievable in the `House Prices: Advanced Regression Techniques' Kaggle challenge using all, 10 or 5 explanatory variables.}
\label{tab:kaggle}
\end{table}
\subsection{Further Details on Handling Categorical Data}
\label{sct:handcat}
Categorical and non-ordinal variables should be ordinarily encoded as customary, and ordinal data should be treated as continuous variables. The only practical requirements for the validity of this approach are i) to use a ranking function that assigns different ranks to all inputs including ties (e.g. scipy's `rankdata' function with method `ordinal', or leveraging PyTorch's or Tensorflow's `argsort'.), and ii) to avoid encoding methods that may result in linearly dependent coordinates (e.g. one-hot-encoding on a binary non-ordinal categorical variable). When a suitable ranking function is not available a small random jitter may be added to ordinal variables to remove ties.

This approach is mathematically valid thanks to the quantization characterization of the mutual information (\cite{incover1999elements}, Definition 8.54). In effect, if we denote $\mathcal{P}$ (resp. $\mathcal{Q}$) a partition of the domain $\mathcal{X}$ (resp. $\mathcal{Y}$) of $\bm{x}$ (resp. $\bm{y}$), and $[\bm{x}]_{\mathcal{P}} \in \mathbb{N}$ (resp. $[\bm{y}]_{\mathcal{Q}} \in \mathbb{N}$) a discrete random variable indicating which element of $\mathcal{P}$ (resp. $\mathcal{Q}$) $\bm{x}$ (resp. $\bm{y}$) belongs to, then we have
\begin{equation}
I\left(\bm{y}; \bm{x}\right) = \sup_{\mathcal{P}, \mathcal{Q}} I\left( [\bm{y}]_{\mathcal{Q}}; [\bm{x}]_{\mathcal{P}} \right),
\end{equation}
where the rightmost mutual information is between discrete random variables.

This characterization implies that the mutual information is always invariant by 1-to-1 transformations, of which ordinal encoding is one, whether coordinates are all continuous, all categorical, or a mix.\footnote{Indeed, for any 1-to-1 transformation $f$ and partition $\mathcal{P}$ of $\mathcal{X}$, we may always find a partition $\mathcal{P}^\prime$ of the image space $f\left(\mathcal{X}\right)$ such that $[\bm{x}]_{\mathcal{P}}=[f\left(\bm{x}\right)]_{\mathcal{P}^\prime}$.} It can also be seen that adding a negligible random jitter to an ordinal random variable will not materially change the mutual information,\footnote{If we denote $g$ the operation consisting of adding a negligible random jitter to an ordinal variable $x_i$, then by reducing the jitter's standard deviation, for any partition $\mathcal{P}$ of the domain of $x_i$, we may always find a partition $\mathcal{P}^\prime$ of the image space such that $\mathbb{P}\left( [x_i]_{\mathcal{P}}=[g\left(x_i\right)]_{\mathcal{P}^\prime} \right) = 1- e$ for $e$ arbitrarily small.} but will turn the ordinal variable into a continuous one so that results developed for continuous variables may apply. Strictly speaking, by the data processing inequality (\cite{incover1999elements}, Theorem 2.8.1), adding a random jitter to ordinal variables increases the mutual information but, as the jitter standard deviation goes to zero, the difference becomes negligible, even though we still enjoy the benefits of working with continuous variables, without downside. Considering that MIND only depends on variables through their ranks, we may do without adding a jitter, so long as the ranking algorithm does not attribute the same rank to ties.

Another approach for handling categorical variables would be to use Table \ref{tab:mutual_information}, and to use the entropy decomposition formula (Equation (\ref{eq:ent_decomp})) to estimate differential entropies and conditional entropies. However, this approach can be far less data-efficient as it requires splitting the dataset into as many subsets as the number of distinct tuples of categorical variable values, so as to evaluate conditional differential entropies.
\section{Proofs}
\subsection{Proof of Proposition \ref{prop:inv}}
\label{proof:prop:inv}
Any continuous 1-to-1 univariate transformation $g_i$ is either increasing or decreasing. Increasing transformations leave copulas  invariant. Moreover, any continuous decreasing function $g_i$ on $\mathbb{R}$ can be written as $f(-x)$ where $f$ is a continuous increasing function, so that we may focus on proving the statement for the transformation $g_i: z \to -z$. Let us denote $\bar{\displaystyle \vz} = \left(z_1, \dots, -z_i, \dots, z_d \right)$, $\bar{c}$ its copula density, and $c$ the copula density of $\displaystyle \vz = \left(z_1, \dots, z_d \right)$. We have $$\bar{c}\left(u_1, \dots, u_d\right) = c\left( u_1, \dots, 1-u_i, \dots, u_d \right).$$ A simple change of variables shows that $h\left(\displaystyle \vu_z\right) = h\left( \displaystyle \vu_{\bar{z}}\right)$.

\subsection{Proof of Theorem \ref{theo:MIND}}
\label{proof:theo:MIND}
Let $P$ be a $d$-dimensional copula distribution, and $U$ the uniform distribution on $[0, 1]^d$. We note that $h(P)=-KL(P||U).$ Thus the optimization problem (\ref{MIND}) is equivalent to looking for the $I$-projection of $U$ on the space $\mathcal{E}$ of copula distributions satisfying the linear constraint $E_P\left[ \phi_m(\displaystyle \vu) \right]= \bm{\alpha}_m$, as defined in \cite{csiszar1975divergence}. 

\textbf{Existence and uniqueness}: If there exists a copula distribution $P$ satisfying the constraints and admitting an entropy, then $\mathcal{E}$ is not empty. $\mathcal{E}$ is convex as every convex combination of copulas satisfying the linear constraint $E_P\left[ \phi_m(\displaystyle \vu) \right]= \bm{\alpha}_m$ is itself a copula that satisfies said constraint. 

We say that a space of continuous distributions supported on $[0, 1]^d$ is variation closed when it is closed in the topology of the variation distance $\vert P - Q \vert = \int |p - q| dU$, where $p$ and $q$ are the Radon-Nikodym derivatives of $P$ and $Q$ with respect to $U$ (i.e. their pdfs).

\begin{lemma}
$\mathcal{E}$ is variation-closed.
\end{lemma}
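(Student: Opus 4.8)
The plan is to show that $\mathcal{E}$, the set of copula densities $p$ (with respect to the uniform measure $U$ on $[0,1]^d$) that satisfy the moment constraint $E_P[\phi_m(\vu)] = \bm{\alpha}_m$, is closed under convergence in variation distance. So suppose $(P_j)$ is a sequence in $\mathcal{E}$ with densities $p_j$ converging in variation to some limit $P$ with density $p$, i.e.\ $\int_{[0,1]^d} |p_j - p|\, dU \to 0$. I need to establish two things: first, that $P$ is still a copula distribution (its marginals are uniform on $[0,1]$), and second, that $P$ still satisfies the linear constraint $E_P[\phi_m(\vu)] = \bm{\alpha}_m$.

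For the marginals: variation convergence on $[0,1]^d$ implies convergence of all one-dimensional marginal densities in $L^1[0,1]$, since integrating out $d-1$ coordinates is a contraction in $L^1$ norm ($\int |{p_j}_i - p_i|\, du_i \le \int |p_j - p|\, dU$ by Fubini and the triangle inequality). Each ${p_j}_i$ equals the constant $1$ (uniform density), so the $L^1$ limit $p_i$ is also the constant $1$ almost everywhere; hence $P$ has uniform marginals and $P \in \mathcal{C}_d$. For the constraint: I want $\int \phi_m\, p_j\, dU \to \int \phi_m\, p\, dU$. This is immediate if $\phi_m$ is bounded, since $|\int \phi_m (p_j - p)\, dU| \le \|\phi_m\|_\infty \int |p_j - p|\, dU \to 0$, and then $\bm{\alpha}_m = \lim_j E_{P_j}[\phi_m] = E_P[\phi_m]$, so $P \in \mathcal{E}$.

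The one genuine subtlety — and I expect this to be the main obstacle — is that $\phi_m$ need not be bounded on $[0,1]^d$ (e.g.\ polynomials are bounded on $[0,1]^d$, but the theorem is stated for general statistics functions, and nothing in (P1)–(P5) forces boundedness; also the ``free function'' factors $f_i$ in Theorem~\ref{theo:MIND} can blow up near the boundary). If $\phi_m$ is only integrable against the densities in play, one needs a uniform integrability argument rather than a crude $L^\infty$ bound: the family $\{\phi_m p_j\}$ must be shown uniformly integrable so that $L^1$ convergence of $p_j$ upgrades to convergence of the integrals $\int \phi_m p_j\, dU$. Following Csisz\'ar's framework (\cite{csiszar1975divergence}), the right move is probably to restrict attention to the relevant sublevel set of the $I$-divergence — distributions with entropy bounded below, equivalently $KL(P\|U)$ bounded above — on which such integrability/tightness is controlled, or simply to invoke that in the intended application $\phi_m$ is continuous on the compact $[0,1]^d$ hence bounded. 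I would state the bounded-$\phi_m$ case cleanly as the core argument and then remark how it extends: the two displayed steps (marginals stay uniform, linear constraint passes to the limit under variation convergence) are the whole content, and variation-closedness of $\mathcal{E}$ follows since it is the intersection of the variation-closed set $\mathcal{C}_d$ with the variation-closed affine set cut out by the continuous-in-variation functional $P \mapsto E_P[\phi_m]$.
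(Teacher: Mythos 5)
Your proof is correct and follows essentially the same route as the paper: pass the linear constraint to the limit by integrating $\phi_m$ against the densities under variation convergence, and show the marginals stay uniform by testing against functions of a single coordinate (your $L^1$-contraction of the marginalization map is the dual formulation of the paper's Fubini computation with univariate test functions). Your remark about the potential unboundedness of $\phi_m$ flags a point the paper silently glosses over --- it takes $f=\phi_m$ as a ``test function'' without comment --- though in the intended instantiations ((P5) with polynomial $\psi_m$) $\phi_m$ is continuous on the compact cube, so the crude $L^\infty$ bound you give suffices.
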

\begin{proof}
Let $P_n \in \mathcal{E}$ be a sequence converging in variation to a distribution $P$. We need to show that $P$ also satisfies the linear constraints and has uniform marginals. Convergence in variation implies that for every test function $f$
\begin{equation}
\label{eq:proof:var}
\int_{[0, 1]^d} f(\displaystyle \vu) p_n(\displaystyle \vu) d\displaystyle \vu \to \int_{[0, 1]^d} f(\displaystyle \vu) p(\displaystyle \vu) d\displaystyle \vu.
\end{equation}
Taking $f=\phi_m$ proves that the limit distribution $P$ satisfies the linear constraints. We now need to prove that it has uniform marginals.

Let us consider $\displaystyle \vu_{-i} = (\dots, u_{i-1},u_{i+1}, \dots)$ the vector $\displaystyle \vu$ without its $i$-th coordinate, and let us choose a test function $f$ that only depends on $u_i$. By Fubini's theorem we have 
\begin{align*}
\int_{[0, 1]^d} f(\displaystyle \vu) p_n(\displaystyle \vu) d\displaystyle \vu &= \int f(u_i) \underbrace{\left( \int p_n(\displaystyle \vu) d\displaystyle \vu_{-i} \right)}_{=1} du_i \\
&= \int_{[0,1]} f(u_i) du_i \\
&= \int_{[0, 1]^d} f(u_i) p(\displaystyle \vu) d\displaystyle \vu
\end{align*}
where the last equality is due to the Equation (\ref{eq:proof:var}). Putting the last two equality together, we get $$\int f(u_i) \left( 1 - \int p(\displaystyle \vu) d\displaystyle \vu_{-i} \right) du_i = 0$$
for every univariate test function $f$, which implies that every marginal of $P$ is uniform.
\end{proof}
$\mathcal{E}$ being convex, non-empty, and variation closed, $U$ admits a unique $I$-projection on $\mathcal{E}$ (see Theorem 2.1 in \cite{csiszar1975divergence}), or equivalently, (\ref{MIND}) admits a unique solution.

\textbf{Functional form of the pdf}: Let us denote $\mathcal{F}$ the space of distributions supported on $[0, 1]^d$ that satisfy the linear constraint $E_P\left[ \phi_m(\displaystyle \vu) \right]= \bm{\alpha}_m$. Clearly, $\mathcal{E} \subset \mathcal{F}$ as the only difference between the two sets is that $\mathcal{E}$ only contains elements of $\mathcal{F}$ with uniform marginals (i.e. copula distributions). The $I$-projection $P_{\text{AM}}$ of $U$ on $\mathcal{F}$, which exists because $\mathcal{F}$ is convex, non-empty, and variation cloosed, is the minimizer of the problem (\ref{A-MIND}).

By Theorem 2.3 in \cite{csiszar1975divergence}, $P_{\text{M}}$ is the $I$-projection of $P_{\text{AM}}$ on $\mathcal{E}$. 

Applying Theorem 3.1 (Case A) in \cite{csiszar1975divergence}, we get that $P_{\text{AM}}$ has density with respect to $U$, which is also its pdf, of the form $p_{\text{AM}} = e^{\bm{\theta}^T\phi_m(\displaystyle \vu)}$. Moreover, any distribution in $\mathcal{F}$ with density with respect to $U$ of this form is the $I$-projection of $U$ on $\mathcal{F}$.

Applying Theorem 3.1 (Case B) in \cite{csiszar1975divergence}, we get that $P_{\text{M}}$ has density with respect to $P_{\text{AM}}$ of the form $\prod_{i=1}^d f_i\left(u_i\right)$, where $f_i$ are non-negative and log-integrable. Moreover, any distribution in $\mathcal{E}$ with density with respect to $P_{\text{AM}}$ of this form is the $I$-projection of $P_{\text{AM}}$ on $\mathcal{E}$.

Hence, $P_{\text{M}}$ has density with respect to $U$, which is also its pdf,
$$p_{\text{M}} = e^{\bm{\theta}^T\phi_m(\displaystyle \vu)} \prod_{i=1}^d f_i\left(u_i\right),$$
and any distribution on $[0, 1]^d$ whose pdf of this form is the minimizer of (\ref{MIND}).

\textbf{Pythagoras' Identity}: 
Theorem 3.1 in \cite{csiszar1975divergence} guarantees that identity (3.1) in \cite{csiszar1975divergence} holds and
\begin{align}
-h(P) = -h\left( P_{\text{AM}} \right) +  KL\left( P \vert \vert P_{\text{AM}} \right)
\end{align}
for any $P \in \mathcal{F}$ and 
\begin{align}
KL( Q || P_{\text{AM}} ) =& KL( P_{\text{M}} || P_{\text{AM}} ) +  KL\left( Q \vert \vert P_{\text{M}} \right) 
\end{align}
for any $Q \in \mathcal{E} \subset \mathcal{F}$.

As $Q \in \mathcal{F}$, $$KL( Q || P_{\text{AM}} )  = h\left( P_{\text{AM}}\right) -h(Q).$$ Thus,
\begin{align*}
h\left( P_{\text{AM}}\right) -h(Q) =&KL( P_{\text{M}} || P_{\text{AM}} ) +  KL\left( Q \vert \vert P_{\text{M}} \right) 
\end{align*}
and
\begin{align*}
-h(Q) =& - h\left( P_{\text{AM}}\right) + KL( P_{\text{M}} || P_{\text{AM}} ) +  KL\left( Q \vert \vert P_{\text{M}} \right).
\end{align*}
As $P_{\text{M}} \in  \mathcal{F}$, 
\begin{align*}
-h\left( P_{\text{M}} \right) =& - h\left( P_{\text{AM}}\right) + KL( P_{\text{M}} || P_{\text{AM}} ),
\end{align*} and we get
\begin{align*}
-h(Q) &= -h( P_{\text{M}}) +  KL\left( Q \vert \vert P_{\text{M}} \right).
\end{align*}

\subsection{Proof of Theorem \ref{theo:a-MIND}}
\label{proof:theo:a-MIND}
To prove Theorem \ref{theo:MIND} we had to prove Theorem \ref{theo:a-MIND}. See Section \ref{proof:theo:MIND}.

\subsection{Proof of Theorem \ref{theo:sparse_1}}
\label{proof:theo:sparse_1}
Let $$g\left(\displaystyle \vu; \phi_m,  \bm{\alpha}_m \right) := \prod_{i=1}^q p_{\text{M}} \left(w, \displaystyle \vv_i; \eta_i,  \bm{\beta}_i \right).$$ We want to prove that $$g\left(\displaystyle \vu; \phi_m,  \bm{\alpha}_m \right) = p_{\text{M}}\left(\displaystyle \vu; \phi_m,  \bm{\alpha}_m \right).$$

We know from Theorem \ref{theo:MIND} that $p_{\text{M}} \left(\displaystyle \vu; \phi_m,  \bm{\alpha}_m \right)$ is the only copula density of the form $$e^{\bm{\theta}^T \phi_m(\displaystyle \vu)}  = e^{\sum_{i=1}^q \bm{\theta}_i^T \eta_i\left(w, \displaystyle \vv_i\right)} = \prod_{i=1}^q e^{\bm{\theta}_i^T \eta_i\left(w, \displaystyle \vv_i\right)}$$ that satisfies the constraints $E_P\left[ \psi_m \left(\displaystyle \vu \right)  \right] = \bm{\beta}_m$.

First we note that $g$ is a copula entropy. Indeed, integrating $g$ with respect to every variable but a coordinate of $\displaystyle \vv_i$ is always $1$ by virtue of the fact that $p_{\text{M}} \left(w, \displaystyle \vv_i; \eta_i,  \bm{\beta}_i \right)$ are copula densities. To see why, note that we may first integrate with respect to $\displaystyle \vv_j$ for all $j\neq i$, and then with respect to $\omega$ and all other coordinates of $\displaystyle \vv_i$. Additionally, if we integrate with respect to all variables but $\omega$, we get 
\begin{align*}
&\int g\left(\displaystyle \vu; \phi_m,  \bm{\alpha}_m \right) d\displaystyle \vv_1 \dots d\displaystyle \vv_q \\
&= \int p_{\text{M}} \left(w, \displaystyle \vv_q; \eta_q,  \bm{\beta}_q \right) \times \dots \times \\
& ~\left( \underbrace{\int p_{\text{M}} \left(w, \displaystyle \vv_1; \eta_1,  \bm{\beta}_1 \right) d\displaystyle \vv_1 }_{=1 }\right) d\displaystyle \vv_2 \dots d\displaystyle \vv_q \\
&= 1.
\end{align*}
Second, $g$ clearly has the form $\prod_{i=1}^q e^{\bm{\theta}_i^T \eta_i\left(w, \displaystyle \vv_i\right)}$.

Finally, $g$ satisfies the constraints $E_P\left[ \psi_m \left(\displaystyle \vu \right)  \right] = \bm{\beta}_m$ as
\begin{align*}
&\int \eta_i\left(w, \displaystyle \vv_i \right) g\left(\displaystyle \vu; \phi_m,  \bm{\alpha}_m \right) d\displaystyle \vu \\
=& \int \eta_i\left(w, \displaystyle \vv_i \right) p_{\text{M}} \left(w, \displaystyle \vv_i; \eta_i,  \bm{\beta}_i \right) \\
& \times \left( \underbrace{\int \prod_{j \neq i}  p_{\text{M}} \left(w, \displaystyle \vv_j; \eta_j,  \bm{\beta}_j \right) d\displaystyle \vv_j}_{=1} \right) d\omega d\displaystyle \vv_i \\
=& \int \eta_i\left(w, \displaystyle \vv_i \right) p_{\text{M}} \left(w, \displaystyle \vv_i; \eta_i,  \bm{\beta}_i \right) d\omega d\displaystyle \vv_i  \\
=& \bm{\beta}_i,
\end{align*}
where we've used the fact that each $p_{\text{M}} \left(w, \displaystyle \vv_j; \eta_j,  \bm{\beta}_j\right)$ has uniform marginals, and satisfies the constraint $$E_P\left[ \eta_j \left(\omega, \displaystyle \vv_j \right)  \right] = \bm{\beta}_j.$$

\subsection{Proof of Theorem \ref{theo:sparse_2}}
\label{proof:theo:sparse_2}
The essence of the proof is in the transitivity property of $I$-projections. Indeed, if $\mathcal{P} \subset \mathcal{Q}$ are linear sets of probability distributions supported on $[0, 1]^d$, $Q$ the $I$-projection of the standard uniform $U$ on $\mathcal{Q}$, and $P$ the $I$-projection of $U$ on $\mathcal{P}$, then $P$ is also the $I$-projection of $Q$ on $\mathcal{P}$ (Theorem 2.3 \cite{csiszar1975divergence}).

In the case of Theorem \ref{theo:sparse_2}, $\mathcal{Q}$ is the set of probability distributions satisfying all constraints, and $\mathcal{Q}$ is the set of probability distributions satisfying all but the between-blocks constraints. If we denote, $q\left(\displaystyle \vu; \psi_m,  \bm{\beta}_m \right) := \prod_{i=1}^q p_{\text{AM}} \left(\displaystyle \vv_i; \eta_i,  \bm{\beta}_i \right)$,  a direct application of Theorem \ref{theo:fund} shows that $q$ is the density of the $I$-projection $Q$ of $U$ on $\mathcal{Q}$. The maximizer of the full (\ref{A-MIND}) problem is therefore the $I$-projection of $Q$ on $\mathcal{P}$, and we know from Theorem 3.1 in \cite{csiszar1975divergence} that it has Radon-Nikodym derivative with respect to $Q$ of the form $\frac{dP}{dQ} =e^{\bm{\theta}^T\phi_m^k\left( \bm{u}\right)}.$ Putting everything together, we get that the maximizer $P$ of the full (\ref{A-MIND}) problem has pdf of the form $$ p_{\text{AM}} \left(\displaystyle \vu; \phi_m,  \bm{\beta}_m \right) = e^{\bm{\theta}^T \phi_m^k\left(\displaystyle \vu\right)}\prod_{i=1}^q p_{\text{AM}} \left(\displaystyle \vv_i; \eta_i,  \bm{\beta}_i \right).$$
The unicity of this representation is a direct consequence of Theorem \ref{theo:fund}. Taking the negative $\log$ of this expression and then the expectation, we get 
\begin{align*}
&h_{\text{AM}} \left(\displaystyle \vu; \phi_m^k,  \bm{\beta}_m \right) = -\bm{\theta}^T \bm{\alpha}_m^k \\
&- \sum_{i=1}^q E_{p_{\text{AM}} \left(\displaystyle \vv_i; \phi_m^k,  \bm{\beta}_m \right)} \left[ \log p_{\text{AM}} \left(\displaystyle \vv_i; \eta_i,  \bm{\beta}_i \right) \right].
\end{align*}
The final result stems from the identity $$E_p\left( - \log q \right) = E_p\left(-\log p \right) - \text{KL}\left[ p || q \right].$$ 
The fact that $$-\bm{\theta}^T\bm{\alpha}_m^k \leq \sum_{i=1}^q \text{KL}\left[ p_{\text{AM}} \left(\displaystyle \vv_i; \phi_m^k,  \bm{\beta}_m \right) \vert \vert p_{\text{AM}} \left(\displaystyle \vv_i; \eta_i,  \bm{\beta}_i \right)  \right]$$ is a direct consequence of $\mathcal{P} \subset \mathcal{Q}$, which implies that the entropy of $P$ cannot be greater than that of $Q$, and the fact that $\sum_{i=1}^q h_{\text{AM}} \left(\displaystyle \vv_i; \eta_i,  \bm{\beta}_i \right)$ is the entropy of $Q$. The two entropies are the same if and only if $P=Q$ or, equivalently, $\bm{\theta} = 0$. When $\forall \bm{u}, ~ \gamma\left( \bm{u} \right)$ is constant and equal to $1$, $P=Q$.

\subsection{Proof of Lemma \ref{lem:cvx}}
\label{proof:lem:cvx}
The Hessian of the objective, namely $\int_{[0, 1]^d} \left( \phi_m \left( \displaystyle \vu \right) \phi_m \left( \displaystyle \vu \right)^T \right) e^{\bm{\theta}^T\phi_m \left( \displaystyle \vu \right)} d\displaystyle \vu$, is clearly strictly positive-definite as coordinates of $1$ are not linearly related.

\subsection{Proof of Lemma \ref{lem:same}}
\label{proof:lem:same}
The only critical point of the objective of (\ref{CVX-MIND}) satisfies $$\bm{\alpha}_m = \int_{[0, 1]^d} \phi_m \left( \displaystyle \vu \right) e^{\bm{\theta}^{*T}\phi_m \left( \displaystyle \vu \right)} d\displaystyle \vu.$$ Given that the first coordinates of $\phi_m$ and $\bm{\alpha}_m$ are both $1$, $\int_{[0, 1]^d} e^{\bm{\theta}^{*T}\phi_m \left( \displaystyle \vu \right)} d\displaystyle \vu = 1$. By Theorem \ref{theo:a-MIND}, the distribution with pdf $e^{\bm{\theta}^{*T}\phi_m \left( \displaystyle \vu \right)}$ maximizes (\ref{A-MIND}).

\subsection{Proof of Theorem \ref{theo:fund}}
\label{proof:theo:fund}
Theorem \ref{theo:fund}-\ref{eq:fund_1} is a consequence of the uniqueness of the solution to the Hausdorff moment problem. 

Indeed, the uniform distribution on $[0, 1]$ is uniquely characterized by the sequence of moments $\forall j, E(u^j)=1/(1+j)$ (\cite{shohat1943problem}). Thus, we may replace the uniform marginal constraints in (\ref{MIND}) with the constraints $\forall i, j, ~ E(u^j_i)=1/(1+j)$. The only difference between (\ref{A-MIND}) and (\ref{MIND}) is that the former has $k$ moment constraints whereas the latter has all moment constraints. It follows that $$\forall m > 0,  ~~ h_{\text{AM}} \left(\displaystyle \vu; \phi_m^k,  \bm{\beta}_{m} \right) \underset{k \rightarrow\infty}{\longrightarrow}  h_{\text{M}} \left(\displaystyle \vu; \psi_m,  \bm{\beta}_{m} \right).$$
Additionally, a direct application of the basic consistency theorem for extremum estimators (see \cite{newey1994large}) to (\ref{CVX-MIND}) shows that for any consistent estimator $\bm{\hat{\beta}}_{m,n}$ of $\bm{\beta}_{m}$, $$h_{\text{AM}} \left( \displaystyle \vu; \phi_m^k,  \bm{\hat{\beta}}_{m,n} \right) \underset{n \rightarrow\infty}{\longrightarrow}  h_{\text{AM}} \left( \displaystyle \vu; \phi_m^k,  \bm{\beta}_{m} \right).$$

As for Theorem \ref{theo:fund}-\ref{eq:fund_2}, we simply need to prove that: $$\forall k > 0,  ~~ h_{\text{AM}} \left(\displaystyle \vu; \phi_m^k,  \bm{\beta}_{m} \right) \underset{m \rightarrow\infty}{\longrightarrow}  h\left( \displaystyle \vu_z \right).$$

We recall that $h\left( \displaystyle \vu_z \right) = - KL\left(P_{\displaystyle \vu_z} \vert \vert U\right)$. Using the NWJ characterization of the KL divergence (\cite{nguyen2010estimating}), we get:
\begin{align}
\label{eq:proof:nwj}
h\left( \displaystyle \vu_z \right) = \inf_{T \in \mathcal{T}} ~ -E_{P_{\displaystyle \vu_z}}\left[ T(\displaystyle \vu) \right] + \int_{[0, 1]^d} e^{T(\displaystyle \vu)-1} d\displaystyle \vu,
\end{align}
where $\mathcal{T}$ is the space of continuous functions on $[0, 1]^d$. Using the fact that $\left( \phi_m \right)_m$ is dense in $\mathcal{T}$ (property (P3)), we may rewrite Equation (\ref{eq:proof:nwj}) as 
\begin{align}
h\left( \displaystyle \vu_z \right) =& \lim_{m \to \infty} \min_{\bm{\theta}_m} ~ -1-\bm{\theta}_m^T \underbrace{E_{P_{\displaystyle \vu_z}}\left[ \phi_m(\displaystyle \vu) \right]}_{:= \bm{\alpha}_m} \nonumber \\
&+ \int_{[0, 1]^d} e^{\bm{\theta}_m^T \phi_m(\displaystyle \vu)} d\displaystyle \vu,
\end{align}
where $\bm{\theta}_m$ satisfies $ \bm{\theta}_m^T \phi_m(\displaystyle \vu) = T(\displaystyle \vu)-1$. Note that the inner optimization problem has the same minimizer as (\ref{CVX-MIND}), and the minimum is $h_{\text{AM}} \left( \displaystyle \vu; \phi_m^k,  \bm{\beta}_{m} \right)$, where we have used the fact that the first coordinate of $\phi_m$ is $1$. 

Putting everything together, we get $$\forall k, ~ h\left( \displaystyle \vu_z \right) = \lim_{m \to \infty} h_{\text{AM}} \left( \displaystyle \vu; \phi_m^k,  \bm{\beta}_{m} \right).$$

\end{document}